\renewcommand{\cite}[1]{\citep{#1}}
\def\shownotes{1}  %set 1 to show author notes
\newcommand{\authnote}[2]{{$\ll$\textsf{\footnotesize #1 notes: #2}$\gg$}}
\newcommand{\authnote}[2]{}
\newtheorem{thm}{Theorem}[section]
\newtheorem{lem}{Lemma}[section]
\newtheorem{rmk}{Remark}[section]
\newcommand{\E}{\mathbb{E}}
\newcommand{\He}{\mathit{He}}
\newcommand{\1}{\mathbbm{1}}
\newcommand{\sgn}{\text{sgn}}
\newcommand{\supp}{\text{supp}}
\begin{document}
\title{Representational Power of ReLU Networks and Polynomial Kernels: Beyond Worst-Case Analysis}
\author{Frederic Koehler \thanks{Department of Mathematics, Massachusetts Institute of Technology. Email: fkoehler@mit.edu}, Andrej Risteski \thanks{Department of Mathematics and IDSS, Massachusetts Institute of Technology. Email:risteski@mit.edu}}
\maketitle

%\vspace{0cm}
\begin{abstract}
  
	There has been a large amount of interest, both in the past and
  particularly recently, into the power of different families of universal approximators, 
	e.g. ReLU networks, polynomials, rational functions. 
	However, current research has focused
  almost exclusively on understanding this problem in a
  \emph{worst-case setting}, e.g. bounding the error of the best
  infinity-norm approximation in a box. In this setting a high-degree polynomial
	is required to even approximate a single ReLU. 
	
	However, in real applications with high dimensional data we
  expect it is only important to approximate the desired function well
  on certain relevant parts of its domain. With this motivation, we analyze
  the ability of neural networks and polynomial kernels of bounded
  degree to achieve good statistical performance on a simple, natural
  inference problem with sparse latent structure. We give almost-tight
  bounds on the performance of both neural networks and low degree
  polynomials for this problem. Our bounds for polynomials involve new
  techniques which may be of independent interest and show major
  qualitative differences with what is known in the worst-case
  setting.
\end{abstract}

\section{Introduction} 

The concept of representational power has been always of great
interest in machine learning. In part the reason for this is that
classes of ``universal approximators'' abound -- e.g. polynomials,
radial bases, rational functions, etc. Some of these were known to
mathematicians as early as Bernstein and Lebesgue\footnote{
  Lebesgue made use of the universality of absolute value and hence ReLu -- see the
introduction of \cite{newman1964rational}.} -- yet it is apparent that
not all such classes perform well empirically.

In recent years, the class of choice is neural networks -- which have
inspired a significant amount of theoretical work. Research has focus
on several angles of this question, e.g. comparative power to other
classes of functions \cite{yarotsky2017error,safran2017depth,telgarsky2017neural} % add more citations
% and rational functions \cite{telgarsky2017neural},
the role of depth
and the importance of architecture \cite{telgarsky2016benefits,safran2017depth,eldan2016power},
and many other topics such as their generalization properties and choice
of optimization procedure \cite{hardt2015train,zhang2016understanding,bartlett2017spectrally}.

Our results fall in the first category: namely, comparing the relative
power of polynomial kernels and ReLU networks -- but with a
significant twist. The flavor of existing results that compare
different classes of approximators is approximately the following:
every predictor in a class $\mathcal{C}_1$ can be approximately
represented as a predictor in a different class $\mathcal{C}_2$, with
some blowup in the size/complexity of the predictor (e.g. degree,
number of nodes, depth).

The unsatisfying aspect of such results is the way approximation is
measured: typically, one picks a domain relevant for the approximation
(e.g. an interval or a box), and considers the
$L_{\infty}, L_2, L_1, \dots$ norm of the difference between
the two predictors on this domain. This is an inherently
``worst-case'' measure of approximation: it's quite conceivable that
in cases like multiclass classification, it would suffice to
approximate the predictor well only on some ``relevant domain'',
e.g. far away from the prediction boundary.

The difficulty with the above question is that it's not always easy to formalize what the ``relevant domain'' is, especially without modeling the data distribution. We tackle here arguably the easiest nontrivial incarnation of this question: namely, when there is \emph{sparse} latent structure.% along with a \emph{sparsity} structure.

\section{Overview of results} 

We will be considering a very simple regression task, where the data has a latent sparse structure. More precisely, our setup is the following.
We wish to fit pairs of (observables, labels) $(X, Y)$ generated by a (\emph{latent-variable}) process: 
\begin{itemize}
\item First we sample a \emph{latent vector} $Z \in \mathbb{R}^m$ from $\mathcal{H}$, where $\mathcal{H}$ is a distribution over sparse vectors.
\item To produce $X \in \mathbb{R}^n$, we set $X = AZ + \xi$, where the noise $\xi \sim subG(\sigma^2)$ is a subgaussian random vector with variance proxy $\sigma^2$ (e.g. $N(0, \sigma^2 I)$). 
\item To produce $Y \in \mathbb{R}$, we set $Y = \langle w, Z\rangle$.  
\end{itemize}
 
We hope the reader is reminded of classical setups like sparse linear
regression, compressive sensing and sparse coding: indeed, the
distribution on the data distribution $X$ is standard in all of these
setups. In our setup additionally, we attach a regression task to this
data distribution, wherein the labels are linearly generated by a
predictor $w$. (One could also imagine producing discrete labels by
applying a softmax operation, though the proofs get more difficult in
this case, so we focus on the linear case.)

Our interest however, is slightly different than usual. Typically, one
is interested in the statistical/algorithmic problem of inferring $Z$,
given $X$ as input (the former studying the optimal rates of
``reconstruction'' for $Z$, the latter efficient algorithms for doing
so). Therefore, one does not typically care about the particular
form of the predictor as long as it is efficiently computable.

In contrast, we will be interested in the \emph{representational
  power} of different types of predictors which are commonly used in
machine learning, and their relative \emph{statistical power} in this
model. In other words, we want to understand how close in average-case error we can get to the optimal
predictor for $Y$ given $X$ using standard function classes in machine learning. Informally, what we will show is the following.
\begin{thm}[Informal] For the problem of predicting $Y$ given $X$ in the generative model for data described above, it holds that: \\  
(1) Very simple two-layer ReLU networks achieve close to the statistically optimal rate. \\ 
(2) Polynomial predictors of degree lower than $\log n$ achieve a statistical rate which is substantially worse. 
(In fact, in a certain sense, close to ``trivial''.) Conversely, polynomial predictors of degree $(\log n)^2$ achieve close to the statistically optimal rate.  
\end{thm}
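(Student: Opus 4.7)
The plan is to reduce everything to a structural understanding of the Bayes-optimal predictor $f^*(X) = \E[Y \mid X] = \langle w, \E[Z \mid X]\rangle$, whose mean-squared error is the benchmark ``statistically optimal rate.'' Under a natural sparsity assumption on $\mathcal{H}$ (e.g.\ random $k$-sparse Bernoulli supports of bounded magnitude) together with an incoherence assumption on $A$, the posterior $\E[Z\mid X]$ is approximately a coordinate-wise soft-thresholding of the correlation statistic $A^\top X$ at a threshold $\tau$ tied to the noise scale $\sigma$. Both the ReLU construction and the polynomial upper bound will shadow this form, and the polynomial lower bound will exploit its inherently thresholded (and hence ``high frequency'') nature.

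For part (1), I would exhibit an explicit two-layer ReLU predictor. Since $s_\tau(z) = \sgn(z)\max(|z|-\tau,0)$ is a difference of two ReLUs, the candidate
\begin{equation*}
\hat f(X) \;=\; \sum_{i=1}^m w_i \bigl[\, \mathrm{ReLU}(\langle A_i, X \rangle - \tau) - \mathrm{ReLU}(-\langle A_i, X \rangle - \tau)\,\bigr]
\end{equation*}
uses $2m$ hidden units with first-layer weights $\pm A_i$ and output weights $w_i$. A direct second-moment calculation, using incoherence to decorrelate contributions from different columns of $A$, then shows $\E[(Y - \hat f(X))^2]$ matches the Bayes rate up to absolute constants, because under these conditions the posterior mean itself is well-approximated by soft-thresholding $A^\top X$.

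For the polynomial lower bound in (2), I would fix an arbitrary degree-$d$ polynomial $p$, expand it in the Hermite basis adapted to the Gaussian component of $\xi$, and use orthogonality to lower-bound $\E[(Y - p(X))^2]$. The claim is that for $d < \log n$, the $L^2$-projection of $f^*$ onto degree-$d$ polynomials carries negligible signal energy relative to $\mathrm{Var}(Y)$. Quantitatively: any degree-$d$ polynomial is a combination of at most $\binom{n+d}{d}$ monomials, and the best such combination can amplify the signal carried by a random $k$-sparse $Z$ only by a factor polynomial in $d$, while the $\approx n^d$ noise directions inject variance growing combinatorially, swamping the signal unless $d \gtrsim \log n$. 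Comparing against the constant predictor $\E[Y]$ yields the ``close to trivial'' conclusion.

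For the polynomial upper bound in (3), the strategy is to polynomially approximate each ReLU unit in the network of (1) on the effective range of its argument. Subgaussianity gives that $\langle A_i, X\rangle$ concentrates in an interval of radius $\tO(\sqrt{\log n})$, and on such a bounded interval ReLU admits a polynomial (Chebyshev or Bernstein style) approximation to $L^\infty$ error $1/\poly(n)$ of degree $\tO(\log^2 n)$; summing over the $2m$ hidden units and propagating approximation error quadratically preserves the optimal rate up to logarithmic factors. The main technical obstacle I foresee is the lower bound in (2): upgrading the heuristic signal-versus-noise count into a uniform bound over \emph{every} degree-$d$ polynomial, not just the Hermite truncation of $f^*$, requires either a careful Hermite-coefficient balancing or a spectral analysis of the linear map sending degree-$d$ polynomials in $X$ to their projection onto $Y$, and this is where I expect the genuinely new techniques flagged in the abstract to be needed.
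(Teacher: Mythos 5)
Your part (1) matches the paper's construction exactly: soft-threshold $A^\top X$ coordinate-wise with two ReLUs per coordinate, then take $\langle w, \cdot\rangle$; the analysis via incoherence and sub-Gaussian concentration is the same.

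Part (3) contains a genuine error that undercuts the entire upper bound. You assert that on a bounded interval, ReLU admits a Chebyshev- or Bernstein-style polynomial approximation of degree $\tilde O(\log^2 n)$ with $L^\infty$ error $1/\mathrm{poly}(n)$. This is false: ReLU is Lipschitz but not smooth, and by the classical Bernstein lower bound (which the paper cites explicitly), the best degree-$d$ polynomial approximation to $|x|$ --- and hence to ReLU --- on $[-1,1]$ has $L^\infty$ error $\Theta(1/d)$. So $1/\mathrm{poly}(n)$ uniform error requires $\mathrm{poly}(n)$ degree, not polylogarithmic. The whole point of the paper's polynomial upper bound is to \emph{circumvent} this obstruction: instead of aiming for uniform accuracy, it designs a polynomial $p_d$ that is exponentially close to $0$ on $[-R,0]$ (the region that matters for denoising) while tolerating an additive $O(\tau)$ error on $[0,R]$ near the kink. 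This is built by approximating the shifted softplus $g_\beta(x-\tau)$ with $g_\beta(x) = \frac{1}{\beta}\log(1+e^{\beta x})$, which \emph{is} analytic, and invoking a quantitative form of Bernstein's theorem for analytic functions on Bernstein ellipses to get degree $O(\log^2(1/\epsilon))$. If you plug in a true $L^\infty$ approximant of polylog degree, it does not exist; if you plug in one of $\mathrm{poly}(n)$ degree, you lose the headline claim. Your sketch needs the nonuniform, kink-avoiding construction or an equivalent idea to close this gap.

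Part (2) is closer in spirit but the mechanism you describe is not quite right. The paper does \emph{not} argue by counting monomials against ``$\approx n^d$ noise directions.'' Its lower bound rests on two pieces you do not have: (i) a structural reduction (Lemma~\ref{lem:no-mixed-monomials}) showing that, because the lower-bound instance has independent coordinates of $Z$ and orthogonal $A$, the optimal degree-$d$ polynomial has no mixed monomials, collapsing the problem to univariate Hermite analysis per coordinate; and (ii) a bias--variance dichotomy in the Hermite basis, where the key obstacle is that the Hermite coefficients of $f$ are amplified by factors of order $d^d$ (not $\mathrm{poly}(d)$, as you write) when passed through the noise, so either $\|\widehat f\|_2$ is small (forcing bias $\gtrsim\gamma$) or it is large (forcing variance $\gtrsim \|\widehat f\|_2^2$). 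The super-polynomial $d^d$ growth is precisely why the resulting threshold is $d = \Theta(\log n/\log\log n)$ rather than a cleaner $\log n$. You flag this step as where new techniques are needed, which is fair, but ``amplify by a factor polynomial in $d$'' is in the wrong direction and ``$n^d$ noise directions swamp the signal'' is not how the argument actually runs.
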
 
In particular, if we consider fitting a polynomial to data points of the form
$(x_i,y_i)$, we would need to search through the space of multivariate
polynomials of degree $\Omega(\log n)$ which has super-polynomial
dimension $n^{\Omega(\log(n))}$, and thus even writing down all of the
variables in this optimization problem takes super-polynomial
time. Practical aspects of using polynomial kernels even with
much lower degree than this have been an important concern
and topic of empirical research; see for example \cite{chang2010training} and references
within\footnote{In the {\bf classification} context, the \emph{kernel trick} is a popular way to optimize even over infinite-dimensional
  spaces of functions. In this case, the ability of algorithms to efficiently find
  a good function depends on more complex interactions between the particular
  choice of kernel and the well-separatedness of the data; we leave analyzing these kinds of quantities for future work.}. 
	
	Note that our lower bound holds even though
very simple ReLU networks can easily solve this regression problem.
On the other hand, our upper bound shows that this
analysis is essentially tight: greater than $polylog(n)$
degree is not required to achieve good statistical performance,
which is much different from the situation under worst-case analysis
(see Section~\ref{sec:upper-bound-explanation}). 

For formal statements of the theorems, see Section~\ref{sec:main-results}. 
 
\section{Prior Work}
There has been a large body of work studying the ability of neural
networks to approximate polynomials and various classes of
well-behaved functions, such as recent work
\cite{yarotsky2017error,safran2017depth,telgarsky2017neural,poggio2017}.
As described in the introduction, these results all focus on the
worst-case setting where the goal is to find a network close to some
function in some norm (e.g. infinity-norm or 1-norm). 
There has been
comparatively little work on the problem of approximating ReLU
networks by polynomials: mostly because it is well-known by classical results
of approximation theory
\cite{newman1964rational,constructive-approximation} that polynomials
of degree $\Omega(1/\epsilon)$ are \emph{required} to approximate even a
single absolute value or ReLU function within error $\epsilon$ in
infinity-norm on the interval $[-1,1]$. In contrast
to this classical result, we will show that if we do not seek
to achieve uniform $\epsilon$-error everywhere for the ReLU (in particular not near the non-smooth point at $0$)  we
can build good approximations to ReLU using polynomials of degree
only $O(\log^2(1/\epsilon))$ (see discussion in Section~\ref{sec:upper-bound-explanation}
and Theorem~\ref{thm:relu-approximant}).

Due to the trivial $\Omega(1/\epsilon)$ lower bound for worst-case
approximation of ReLU networks by polynomials,
\cite{telgarsky2017neural} studied the related problem of
approximating a neural network by rational functions. (A classical result of approximation theory
\cite{newman1964rational} shows that rational functions of degree
$O(\log^2(1/\epsilon))$ can get within $\epsilon$-error of the
absolute value function.) %In \cite{telgarsky2017neural} bounds are
%given for approximating neural networks by rational functions and vice
%versa in $\ell_{\infty}$-norm on a box $[-1,1]^n$. The conversion from
%neural networks to rational functions proceed by plugging in a
%rational function approximation to each ReLU and bounding the
%resulting error; 
In particular, \cite{telgarsky2017neural} shows that rational functions of degree
$polylog(1/\epsilon)$ can get within $\epsilon$ distance in
$L_{\infty}$-norm of bounded depth ReLU neural networks.

Somewhat related is also the work of 
\cite{livni-et-al} who considered neural networks with quadratic
activations and related their expressivity to that of sigmoidal
networks in the depth 2 case by building on results of
\cite{shalev2011learning} for approximating sigmoids. The result in 
\cite{shalev2011learning} is also proved using complex-analytic tools, though the details are
different (in particular, they do not use Bernstein's theorem).
%, in contrast
%to the trivial $\Omega(1/\epsilon)$ lower bound for conversion to
%polynomials alluded to above.

There is a vast literature on high dimensional
regression and compressed sensing which we do not attempt to survey,
since the main goal of our paper is \emph{not} to develop new techniques
for sparse regression but rather to analyze the representation power
of kernel methods and neural networks. Some relevant references for sparse recovery
can be found in \cite{vershynin2016high,rigollet-hdstats}.
We emphasize that the upper bound via soft thresholding we show (Theorem~\ref{thm:relu-upperbound-intro}), is implicit in the literature on
high-dimensional statistics; we include the proofs here solely
for completeness.
%We note that this method can be
%thought of as a ``one-step'' version of the LASSO when it is
%computed via iterative thresholding (see
%\cite{beck2009fast}).

\section{Main Results}\label{sec:main-results}
  In this section we will give formal statements of the results, explain
  their significance in detail, and give some insight into the techniques
  used.
  
  % Let us first formally state the results.
 First, let us state the assumptions on the parameters of our generative model:
\begin{itemize}
\item We require $Z$ to be sparse; more precisely we will require that $|\supp(Z)| \le k$ and $\|Z\|_1 \le M$ with high probability.\footnote{The assumed 1-norm bound $M$ plays a minor role in our bounds and is only used when the incoherence $\mu > 0$.}
\item We assume that $A$ is a $\mu$\emph{-incoherent} $n \times m$ matrix,
  which means that %\footnote{Here $\|A\|_{\infty}$ denotes the max-norm; see Notation.}
  $\|A^{\top} A - I\|_{\infty} \le \mu$ for
  some $\mu \ge 0$.
\item We assume (without loss of generality, since changing the magnitude of $w$ just rescales $Y$) that $\|w\|_{\infty} = 1$., 
\end{itemize}

The assumption on $A$ is standard in the literature on sparse recovery (see reference texts \cite{rigollet-hdstats,moitra2014algorithmic}).
In general one needs an assumption like this (or a stronger one, such as the RIP property) in order to guarantee that standard
algorithms such as LASSO actually work for sparse recovery. Note that such an $A$ can be produced e.g. by taking a matrix with i.i.d. entries of the form $\pm 1/\sqrt{n}$ where the
sign is picked randomly and this is possible even when $m >> n$; furthermore the resulting $\mu$ is quite small ($O(1/\sqrt{n})$).

For notational convenience, we will denote $\|A\|_{\infty} = \max_{i,j} |A_{i,j}|$.
\begin{comment}
\begin{itemize}
%\item The \emph{latent distribution} $\mathcal{H}$ over $\mathbb{R}^m$ generates $k$-sparse vectors, i.e. if $Z \sim \mathcal{H}$ then $|\supp(Z)| \le k$ almost surely. %We also require that the variance of $Z_i$ conditioned in $i \in supp(Z)$ is at most $\gamma^2$ for some constant $\gamma^2$. % need some upper bound to deal with interference in the incoherent case.
  %with high probability: more specifically we require that $\Pr_{h \sim H}[|\supp(h)| > k] \le \nu$ for some small $\nu \ge 0$.  
\item $A$ is an $\mu$-incoherent $n \times m$ matrix such that $\|A^{\top} A - Id\|_{\infty} \le \mu$ for some $\mu > 0$. 
  % \item We will assume $w$ is dense in the following sense: for any subset $S$ of size $k$, $\|w|_S\|_1 \le \Delta k \|w\|_2/\sqrt{m}$. Without loss of generality we will assume $\|w\|_2 = \sqrt{m}$ so we just have $\|w|_S\|_{1} \le \Delta k$.
  %\item We assume some bound on the size of $w$ is dense in the following sense: for any subset $S$ of size $k$, $\|w|_S\|_1 \le \Delta k$. Without loss of generality we will assume $\|w\|_2 = \sqrt{m}$ so we just have $\|w|_S\|_{1} \le \Delta k$.  
\end{itemize}
\end{comment}

We proceed to the results: 

\subsection{Regression Using 2-layer ReLU Networks} 

Proceeding to the upper bounds, we prove the following theorem, which shows that 2-layer ReLU networks can achieve an almost optimal statistical rate. Let us 
denote the soft threshold function with threshold $\tau$ as $\rho_{\tau}(x) := \sgn(x)\tau \min(0, |x| - \tau) = \mbox{ReLU}(x - \tau) + \mbox{ReLU}(-x + \tau)$.

Consider the following estimator (for $y$), corresponding to a 2-layer neural network:
\[ \hat{Z}_{NN} := \rho_{\tau}^{\otimes n}(A^{\top} X) \]
\[ \hat{Y}_{NN} := \langle w, \hat{Z}_{NN} \rangle \]
We can prove the following result for this estimator (see Appendix~\ref{sec:relu-upperbound} of the supplement): 

\begin{thm}[2-layer ReLU] \label{thm:relu-upperbound-intro}
Assume $A$ is $\mu$-incoherent. With high probability, the estimator  $\hat{Y}_{NN}$ satisfies 
\[ (\hat{Y}_{NN} - Y)^2 = O((1 + \mu) \sigma^2 k^2 \log(m) + \mu^2 k^2 M^2) \]
\end{thm}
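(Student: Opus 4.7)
The plan is to follow the classical soft-thresholding analysis from sparse recovery, keeping careful track of constants to match the claimed bound. The first step is the decomposition
\[ A^\top X = (A^\top A) Z + A^\top \xi = Z + EZ + A^\top \xi, \]
where $E := A^\top A - I$ satisfies $\|E\|_\infty \le \mu$ by the incoherence assumption. Writing $n := EZ + A^\top \xi$ for the effective per-coordinate noise, we have $A^\top X = Z + n$, and the game reduces to showing $\|n\|_\infty \le \tau$ with high probability for an appropriate threshold $\tau$.

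I would then bound $\|n\|_\infty$ in two pieces. For the deterministic piece, $|(EZ)_i| \le \|E\|_\infty \|Z\|_1 \le \mu M$ for every $i$ by H\"older. For the stochastic piece, $(A^\top \xi)_i = \langle A_{:,i}, \xi \rangle$ is subgaussian with variance proxy $\sigma^2 \|A_{:,i}\|_2^2$, and $\|A_{:,i}\|_2^2 = (A^\top A)_{ii} \le 1 + \mu$ by incoherence. A standard subgaussian tail bound together with a union bound over the $m$ coordinates of $A^\top X$ then gives $\max_i |(A^\top \xi)_i| \lesssim \sqrt{(1+\mu)\sigma^2 \log m}$ with high probability. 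Setting $\tau := C\bigl(\sqrt{(1+\mu)\sigma^2 \log m} + \mu M\bigr)$ for a suitable absolute constant $C$ yields $\|n\|_\infty \le \tau$ w.h.p.

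Conditional on this event, I would analyze the soft thresholding coordinate by coordinate. For $i \notin \supp(Z)$, we have $(A^\top X)_i = n_i$ with $|n_i|\le \tau$, so $\rho_\tau((A^\top X)_i) = 0 = Z_i$ exactly. For $i \in \supp(Z)$, a short case analysis on whether $|Z_i + n_i|$ exceeds $\tau$ shows $|\rho_\tau(Z_i + n_i) - Z_i| \le 2\tau$ (if $|Z_i + n_i| \le \tau$ then $|Z_i| \le |n_i| + \tau \le 2\tau$; if $|Z_i + n_i| > \tau$ then the error is $|n_i - \sgn(Z_i+n_i)\tau| \le 2\tau$). Summing over the at most $k$ nonzero coordinates gives $\|\hat{Z}_{NN} - Z\|_1 \le 2 k\tau$, and then
\[ |\hat{Y}_{NN} - Y| \;=\; |\langle w, \hat{Z}_{NN} - Z\rangle| \;\le\; \|w\|_\infty\, \|\hat{Z}_{NN}-Z\|_1 \;\le\; 2 k\tau. \]
Squaring and using $(a+b)^2 \le 2(a^2+b^2)$ yields the stated bound $O\bigl((1+\mu)\sigma^2 k^2 \log m + \mu^2 k^2 M^2\bigr)$.

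I do not expect any serious obstacle here, since the estimator is designed so that the analysis decouples cleanly. The one place that requires some care is making sure the subgaussian concentration bound is applied per coordinate so that the variance proxy carries the column-norm factor $\|A_{:,i}\|_2^2 \le 1+\mu$ (rather than some crude operator-norm bound that would blow up with $m$ or $n$), and that the union bound is over the $m$ coordinates of $A^\top X$ so that only $\log m$ (not $\log n$ or $\log(mn)$) appears. Once this is set up, the rest is routine bookkeeping.
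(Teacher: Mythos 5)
Your proposal is correct and follows essentially the same route as the paper: decompose $A^\top X = Z + (A^\top A - I)Z + A^\top \xi$, bound the deterministic bias term by $\mu M$ via incoherence and the stochastic term by $\sqrt{(1+\mu)\sigma^2\log m}$ via per-coordinate subgaussian concentration with a union bound, observe that on the high-probability event soft thresholding at $\tau$ zeroes out every coordinate off $\supp(Z)$ and makes at most $2\tau$ error on each of the $\le k$ support coordinates, and finish with H\"older. The only cosmetic difference is that you package the final step as $|\langle w,\hat Z-Z\rangle|\le \|w\|_\infty\|\hat Z-Z\|_1\le 2k\tau$, whereas the paper notes $\supp(\hat Z)\subset\supp(Z)$ and uses $|\langle w|_S,(\hat Z-Z)|_S\rangle|\le k\,\|\hat Z-Z\|_\infty$; these are the same calculation.
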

In order to interpret this result, recall that one typically considers 
incoherent matrices where $\mu$ is quite small -- in particular $\mu \ll 1$. Thus the error of the
estimator is essentially $O(\sigma^2 k^2 \log(m))$, i.e. $\sigma^2$ error ``per-coordinate''. It can be shown that this upper bound is nearly information-theoretically optimal (see Remark~\ref{rmk:info-optimal}).

\subsection{Regression Using Polynomials} 

\subsubsection{Lower Bounds for Low-Degree Polynomials} 

We first show that polynomials of degree smaller than
$O(\log n)$ essentially cannot achieve a ``non-trivial'' statistical
rate. This holds even in the simplest possible case for the dictionary $A$:
when it's the identity matrix. 

More precisely, we consider the
situation in which $A$ is an orthogonal matrix (i.e. $\mu = 0, m = n$),
$w \in \{\pm 1\}^n$, the noise distribution is Gaussian $N(0,\sigma^2 I)$,
and the entries of $Z$ are independently 0 with probability
$1 - k/n$ and $N(0,\gamma^2)$ with probability $k/n$. Then we show
\begin{thm}\label{thm:poly-lower-bound}
  Suppose $k < n/2$ and $f$ is a multivariate degree $d$ polynomial. Then
\[ \E[(f(X) - Y)^2] \ge (1/4) \frac{\gamma^2 k}{\left(1 + \sqrt{k/n}(d + 1)^{3d + 2} \left(1 + (\gamma/\sigma)^d\right)\right)^2} \]
\end{thm}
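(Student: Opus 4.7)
The plan is to lower bound $\E[(f(X) - Y)^2]$ by upper bounding the norm of the orthogonal projection $\pi(Y)$ of $Y$ onto the subspace $V_d$ of degree-$\le d$ polynomials in $L^2(X)$. Since $\min_{f \in V_d} \E[(f - Y)^2] = \E[Y^2] - \|\pi(Y)\|^2$, it suffices to show $\|\pi(Y)\|^2 \le \E[Y^2]\bigl(1 - 1/(4(1+T)^2)\bigr)$. I will exploit the product structure of $X$'s distribution to reduce the multivariate projection problem to a univariate calculation.

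Because $A$ is orthogonal (WLOG $A = I$ after rotating $X$), $\xi \sim N(0, \sigma^2 I)$, and the $Z_i$ are independent, the coordinates $X_i = Z_i + \xi_i$ are i.i.d. Thus the orthonormal polynomial basis of $L^2(X)$ factorizes as $\phi_\alpha(X) = \prod_i \phi_{\alpha_i}(X_i)$, where $\{\phi_k\}_{k \ge 0}$ is the univariate orthonormal polynomial basis for the marginal of $X_1$ (with $\phi_0 \equiv 1$). The key observation is that $\E[\phi_\alpha(X) Y] = 0$ whenever $|\supp(\alpha)| \ge 2$: writing $Y = \sum_i w_i Z_i$ and using independence across coordinates, $\E[\phi_\alpha(X) Z_i] = \E[\phi_{\alpha_i}(X_i) Z_i] \prod_{j \ne i} \E[\phi_{\alpha_j}(X_j)]$, and $\E[\phi_k(X_j)] = 0$ for $k \ge 1$ makes the product vanish whenever any $\alpha_j \ge 1$ for $j \ne i$. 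Using $w_i^2 = 1$, this collapses to the identity $\|\pi(Y)\|^2 = n \sum_{k=1}^d \E[\phi_k(X) Z]^2$, i.e. a univariate problem.

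To bound $\E[\phi_k(X) Z]$, I expand $\phi_k$ in Hermite polynomials with respect to $N(0, \sigma^2)$: $\phi_k(x) = \sum_{j=0}^k a_{kj} H_j(x/\sigma)/\sqrt{j!}$. The Hermite ``martingale'' identity $\E_{\xi \sim N(0,\sigma^2)}[H_j((Z+\xi)/\sigma)] = (Z/\sigma)^j$ yields $\E[\phi_k(X) Z] = \sum_j a_{kj} \sigma^{-j} (j!)^{-1/2} \E_Z[Z^{j+1}]$, with $\E_Z[Z^{j+1}] = (k/n) \gamma^{j+1} j!!$ for $j$ odd and $0$ otherwise. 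Cauchy--Schwarz then gives $\E[\phi_k(X) Z]^2 \le \bigl(\sum_j a_{kj}^2\bigr) \bigl((k/n)^2 \gamma^2 \sum_{j \text{ odd}, j \le k} (\gamma/\sigma)^{2j} (j!!)^2/j!\bigr)$. The first factor equals $\E_{N(0, \sigma^2)}[\phi_k^2]$; the mixture decomposition $1 = \E_{\text{true}}[\phi_k^2] = (1 - k/n) \E_{N(0, \sigma^2)}[\phi_k^2] + (k/n) \E_{N(0, \sigma^2 + \gamma^2)}[\phi_k^2]$ gives the bound $\E_{N(0,\sigma^2)}[\phi_k^2] \le 1/(1 - k/n) \le 2$ under $k < n/2$. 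For the second factor, the identity $(j!!)^2/j! = (2m+1)\binom{2m}{m}/4^m$ for $j = 2m+1$ lets me bound the series by a polynomial in $d$ times $(1 + (\gamma/\sigma)^d)^2$. Summing over $k$ and multiplying by $n$ yields $\|\pi(Y)\|^2 \le \text{poly}(d) \cdot (k^2/n) \gamma^2 (1 + (\gamma/\sigma)^d)^2$, which after dividing by $\E[Y^2] = k\gamma^2$ matches the theorem's form with a polynomial-in-$d$ factor in place of the (much looser) $(d+1)^{3d+2}$.

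The main technical obstacle is the combinatorial bookkeeping in the second factor: tracking the parity restrictions, choosing the right estimate on $(j!!)^2/j!$, and summing $\sum_{2m+1 \le k}(\gamma/\sigma)^{4m+2}(2m+1)\binom{2m}{m}/4^m$ carefully so that the resulting factor is under control for all regimes of $\gamma/\sigma$. A secondary, minor point is the translation from the correlation bound $\|\pi Y\|^2 / \E[Y^2] \le T'^2$ with $T'$ polynomial in $d$ into the theorem's form $\E[(f - Y)^2] \ge \gamma^2 k/(4(1+T)^2)$: when $T'^2 \le 3/4$ the inequality $1 - T'^2 \ge 1/4 \ge 1/(4(1+T)^2)$ suffices, and when $T'^2 > 3/4$ the gap $T/T' \gtrsim (d+1)^{3d+2}/\text{poly}(d)$ makes $1/(4(1+T)^2)$ small enough that the desired bound still follows.
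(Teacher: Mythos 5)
Your approach is genuinely different from the paper's, and in the regime where it applies it is actually \emph{sharper}. The paper proves Theorem~\ref{thm:poly-lower-bound} by an explicit bias--variance decomposition in the Hermite basis for $N(0,\sigma^2)$ (Lemmas~\ref{lem:poly-variance} and~\ref{lem:poly-bias}): for \emph{any} degree-$d$ polynomial $f$ it lower bounds the error by both a ``variance'' term proportional to the Hermite mass of $f$ and a ``bias'' term that is large when that mass is small, and then balances the two. You instead use the exact identity $\min_{f \in V_d}\E[(f-Y)^2] = \E[Y^2] - \|\pi(Y)\|^2$ for the orthogonal projection onto $V_d$, and upper bound $\|\pi(Y)\|^2$ in the intrinsic orthonormal polynomial basis of the actual mixture marginal of $X$. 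Your observation that $\E[\phi_\alpha(X)Y]=0$ for $|\supp(\alpha)|\ge 2$ plays the role of Lemma~\ref{lem:no-mixed-monomials}; the Hermite-martingale trick and the mixture identity for $\E_{N(0,\sigma^2)}[\phi_\ell^2]$ are slick, and your final factor is $\mathrm{poly}(d)$ rather than $(d+1)^{3d+2}$, a real improvement.

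However, the ``secondary, minor point'' at the end is a genuine gap. Your Cauchy--Schwarz step gives $\|\pi(Y)\|^2 \le T'^2\,\E[Y^2]$ with $T' = \sqrt{\mathrm{poly}(d)\,(k/n)}\,(1+(\gamma/\sigma)^d)$. When $T' \ge 1$ (a nonempty regime, reached for large $\gamma/\sigma$ or $d$ relative to $\sqrt{n/k}$) this bound is vacuous --- it is weaker than the trivial Bessel bound $\|\pi(Y)\|^2 \le \E[Y^2]$ --- and your lower bound $\E[Y^2](1-T'^2)$ is nonpositive. The theorem still asserts a strictly positive lower bound $\gamma^2 k / (4(1+T)^2)$ in this regime. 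Your closing sentence only observes that $T \gg T'$ so the \emph{claimed} bound is small; that does not make it follow from a vacuous inequality. Indeed, even in the intermediate range $1 - 1/(4(1+T)^2) < T'^2 < 1$ the required inequality $1-T'^2 \ge 1/(4(1+T)^2)$ fails, and the size of the gap shrinks only as $T' \to 1^-$. The paper's bias--variance route sidesteps this because the two bounds it combines are both nonvacuous for every $f$, so their balancing always returns a positive quantity. To repair your argument you would need either a non-Cauchy--Schwarz control of $\|\pi(Y)\|^2$ tight enough to certify $\|\pi(Y)\|^2 \le (1 - 1/(4(1+T)^2))\E[Y^2]$ in the high-SNR regime, or a separate lower bound on the irreducible error $\E[\mathrm{Var}(Y\mid X)] = n\,\E[\mathrm{Var}(Z_1\mid X_1)]$ (which is strictly positive whenever $\sigma>0$) showing it dominates $\gamma^2 k/(4(1+T)^2)$ when $T'\ge 1$. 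Neither is in the current proposal, and both require nontrivial additional calculation.
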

In order to parse the result, observe that the numerator is of order
$\gamma^2 k$ which is the error of the trivial estimator and
the denominator is close to $1$ unless $d$ is sufficiently large with
respect to $n$.
%,so this result says that low-degree polynomials make
%error that differs only by a constant factor from that of the trivial
%estimator.
%We make this statement more precise by looking at the
%behavior of the rhs in the high-dimensional limit of $n \to \infty$.
More precisely, assuming %$k$ is a fixed constant and 
the signal-to-noise ratio $\gamma/\sigma$ does not grow too quickly with respect to $n$, we see
that the denominator is close to 1 unless $d^{d} = \Omega(\sqrt{n})$,
i.e. unless $d$ is of size $\Omega((\log n)/\log \log n)$.

%Note that in order to just writing down the regression problem with a
%multivariate polynomial of degree $d$ requires $\Theta(n^d)$ many
%variables; therefore if $d = \omega((\log n)/\log \log n)$ it takes
%super-polynomial time to write down and solve the kernel regression
%problem of finding $f$ given data, without even considering what
%kind of sample complexity is needed to fit such high-degree polynomials.

%We then use Fourier analysis on orthogonal polynomial
%expansions in order to show that polynomials with large coefficients are
%very noise-sensitive and thus perform poorly in the objective; however,
%if the coefficients are very small then the estimator is not sensitive enough
%to the signal to perform successful reconstruction. Essentially, analyzing
%this tradeoff gives the result.

\subsubsection{Nearly Matching Upper Bounds via Novel Polynomial Approximation to ReLU}\label{sec:upper-bound-explanation}
The lower bound of the previous section leaves open the possibility
that polynomials of degree $O(\mbox{polylog}(n))$ still do not suffice to
perform sparse regression and solve our inference problem; Indeed, it
is a well-known fact (see e.g. \cite{telgarsky2017neural}) that to
approximate a single ReLU to $\epsilon$-closeness in infinity norm in
$[-1,1]$ requires polynomials of degree $\mbox{poly}(1/\epsilon)$; this
follows from standard facts in approximation theory
\cite{constructive-approximation} since ReLU is not a smooth function.

Since estimates for $Y$ typically accumulate error from estimating
each of the $n$ coordinates of $Z$, one would expect we need
$\epsilon = \mbox{poly}(1/n)$ and this suggests that $poly(n)$-degree
polynomials may be needed to build a multivariate polynomial
with similar statistical performance to the 2-layer ReLU network
which computes $\hat{Y}_{NN}$.

Surprisingly, we show this intuition is incorrect! In fact, we show
how to convert the neural network $\hat{Y}_{NN}$ to a $\mbox{polylog}(n)$ degree polynomial with similar statistical
performance by designing a new low-degree polynomial approximation to
ReLU. Formally this is summarized by the following theorem, where $\hat{Y}_{d,M}$
is the corresponding version of $\hat{Y}_{NN}$ formed by replacing
each ReLU by our polynomial approximation.
\begin{thm}\label{thm:poly-upper-bound-intro}
  Suppose 
  $\tau = \Theta(\sigma \sqrt{(1 + \mu) \log m} + \mu M)$ and
  $d \ge d_0 = \Omega((2 + \frac{M}{\tau}) \log^2(M m/\tau^2))$.
With high probability, the estimator  $\hat{Y}_{d,M}$ satisfies 
\[ (\hat{Y}_{d,M} - Y)^2 = O(k^2 ((1 + \mu) \sigma^2 \log(m) + \mu^2 M^2)) \]
\end{thm}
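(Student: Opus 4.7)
My plan is to build $\hat Y_{d,M}$ coordinate-wise by replacing each ReLU in $\hat Y_{NN}$ with a univariate polynomial approximant $p_d$ of degree $d$, turning $\tilde\rho_\tau$ into a polynomial and hence $\hat Y_{d,M}$ into a multivariate polynomial of degree $d$; the final bound is then obtained via triangle inequality against Theorem~\ref{thm:relu-upperbound-intro}. The key conceptual point is that we do not need uniform $\epsilon$-approximation of ReLU on the entire input range (which would force degree $\Omega(1/\epsilon)$ by classical lower bounds); instead, we need approximation only outside a small neighborhood of the non-smooth point, while the polynomial is required to stay small (of size comparable to $\tau$) inside that neighborhood. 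Theorem~\ref{thm:relu-approximant} supplies such a $p_d$ with degree $O((1+M/\tau)\log^2(Mm/\tau^2))$, and is the main new ingredient.

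The first technical step is to pin down the range of the input to the ReLUs with high probability. Writing $A^\top X = Z + (A^\top A - I)Z + A^\top \xi$, incoherence gives $\|(A^\top A - I)Z\|_\infty \le \mu \|Z\|_1 \le \mu M$, while each coordinate of $A^\top \xi$ is subgaussian with variance proxy at most $\sigma^2(1+\mu)$ (since the diagonal of $A^\top A$ lies in $[1-\mu, 1+\mu]$), so a union bound over $m$ coordinates yields $\|A^\top \xi\|_\infty = O(\sigma\sqrt{(1+\mu)\log m})$ with high probability. Combined, $\|A^\top X\|_\infty \le M + O(\tau)$ with high probability, so the arguments fed into the ReLU summands of $\rho_\tau$ all lie in a bounded interval of half-width $M + O(\tau)$.

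Applying Theorem~\ref{thm:relu-approximant} on this interval now gives a polynomial $p_d$ with $|p_d(u) - \mbox{ReLU}(u)| \le \epsilon$ for $u$ outside $(-\tau/2, \tau/2)$, and $|p_d(u)| = O(\tau)$ inside that excluded interval. Using $p_d$ in place of ReLU in the definition of $\rho_\tau$, the per-coordinate discrepancy $|\tilde\rho_\tau((A^\top X)_i) - \rho_\tau((A^\top X)_i)|$ is $O(\epsilon)$ in both regimes: either $(A^\top X)_i$ is far from the non-smooth points $\pm\tau$ of $\rho_\tau$ and the approximation bound applies directly to each ReLU summand, or it is close to one of them, in which case both $\rho_\tau$ and $\tilde\rho_\tau$ are $O(\tau)$ there and the difference is absorbed once we pick $\epsilon = O(\tau)$. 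Choosing $\epsilon = \Theta(k\tau/m)$ then fixes the polylogarithmic degree dictated by the theorem statement.

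The combination step is then routine: using $\|w\|_\infty \le 1$,
\[ |\hat Y_{d,M} - \hat Y_{NN}| = \Bigl|\sum_{i=1}^m w_i\bigl(\tilde\rho_\tau - \rho_\tau\bigr)\bigl((A^\top X)_i\bigr)\Bigr| \le m\epsilon = O(k\tau) = O\bigl(k\sqrt{(1+\mu)\sigma^2 \log m} + k\mu M\bigr), \]
so $(\hat Y_{d,M} - \hat Y_{NN})^2 = O(k^2((1+\mu)\sigma^2\log m + \mu^2 M^2))$, matching the $\hat Y_{NN}$ bound of Theorem~\ref{thm:relu-upperbound-intro} up to constants; a final triangle inequality closes the argument. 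I expect the main obstacle to lie in the construction of $p_d$ invoked from Theorem~\ref{thm:relu-approximant}: producing a degree-$\polylog$ polynomial that is uniformly close to ReLU away from the kink and yet stays bounded by $\tau$ near it (so that the roughly $m-k$ ``inactive'' coordinates, where $(A^\top X)_i$ is near zero and $\rho_\tau = 0$, do not each contribute a large spurious error) is the genuinely novel approximation-theoretic contribution underlying the result.
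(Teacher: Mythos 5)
Your high-level plan — feed in the polynomial approximant from Theorem~\ref{thm:relu-approximant}, control the input range via incoherence and subgaussian tails, and compare to $\hat Y_{NN}$ — is the right one, and the conceptual emphasis on region-dependent approximation (small near zero, accurate elsewhere) is exactly the point of the paper. The difference in route (triangle inequality against $\hat Y_{NN}$ vs.\ the paper's direct $\ell_1$ bound $\|\hat Z_{d,M} - Z\|_1 \le 6k\tau$ followed by H\"older) is cosmetic and fine.

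However, there is a genuine gap in the accounting of the per-coordinate error, stemming from a misreading of Theorem~\ref{thm:relu-approximant}. That theorem does \emph{not} give a uniform $\epsilon$-approximation to ReLU away from the kink. It gives (after rescaling by $R=M_\tau$ and mollification parameter $\tau/M_\tau$) an \emph{exponentially small} error on the negative side, but only an $O(\tau)$ error on the positive side. That $O(\tau)$ positive-side error is intrinsic to the mollification step and cannot be made $o(\tau)$. Consequently, the per-coordinate discrepancy $|\tilde\rho_\tau - \rho_\tau|$ is not ``$O(\epsilon)$ in both regimes'': for a coordinate $i$ with $|(A^\top X)_i| > \tau$ (i.e.\ a support coordinate), the discrepancy is $O(\tau)$, which is far larger than your chosen $\epsilon = \Theta(k\tau/m)$. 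Your bound $|\hat Y_{d,M} - \hat Y_{NN}| \le m\epsilon$ therefore does not follow: it would undercount the support coordinates by a factor of roughly $\tau/\epsilon = m/k$.

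The fix is to split the sum into the $\le k$ coordinates in the support of $Z$ (error $O(\tau)$ each, total $O(k\tau)$) and the remaining $\le m$ coordinates, which with high probability lie in $(-\tau,\tau)$ and hence have discrepancy bounded by the exponentially small parameter (error $O(\epsilon)$ each, total $O(m\epsilon)$). Choosing $\epsilon \le \tau/m$ — which the stated degree $d_0$ readily supports — gives $O(k\tau) + O(m\epsilon) = O(k\tau)$. In fact, this split is precisely the proof's most important observation: the reason $\mbox{polylog}$ degree suffices is that the $O(\tau)$ approximation error only costs anything on the $k$ active coordinates, while on the $\approx m$ inactive coordinates the polynomial must be driven exponentially close to zero so that the errors do not accumulate. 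Glossing over this by writing a single $m\epsilon$ term obscures the very phenomenon the theorem is demonstrating.
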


%the intuition is that if we want to build a neural
%network which is reasonably tolerant to random variation in inputs,
%there need to be regions in our non-linearity which are not sensitive
%so that the noise does not propagate through to the output of our
%network. We verify this intuition in the particular case of our sparse
%regression setup, where we show that plugging in our new polynomial
%approximant in place of ReLU does indeed give polynomials which are able
%to recover $Z$ and $Y$ with similar performance to the neural network.

\section{Techniques}

We briefly survey each of the results above. 

Proceeding to the upper bound using a 2-layer ReLU network, results similar to Theorem~\ref{thm:relu-upperbound-intro} are
standard in the literature on sparse linear regression,
though we include the proof for completeness in
Appendix~\ref{sec:relu-upperbound} of the supplement. The intuition is
quite simple: the estimator $\hat{Z}_{NN}$ can make use of the
non-linearity in the soft threshold to zero out the coordinates in the
estimate $A^{\top} X$ which are small and thus ``reliably'' not in the
support of the true $z$. This allows the estimator to only make
mistakes on the non-zero coordinates. %the worst situation is when
%one of the non-zero coordinates is of the same size as the maximum
%of the noise, which of order $\sigma \sqrt{\log m}$.
%Thus we achieve an error of only $O(\sigma \sqrt{\log(m)})$ per
%coordinate; computing $Y$ adds up the error from each of these coordinates
%and gives the theorem. 
%optimal even in the case $A$ is the identity matrix, in which case the optimal error
%for estimating $Y$ is directly related to the optimal error for recovering $Z$,
%which is a sparse regression problem for which lower bounds are well-known (see e.g. reference text
%\cite{rigollet-hdstats}); this is further explained in Remark~\ref{rmk:info-optimal}

For lower bound on low-degree polynomial kernels, we first make a probabilistic argument
which lets us reduce to studying the coordinate-wise error
in reconstructing $Z$, and subsequently use Fourier analysis on orthogonal polynomials
to get a bias-variance tradeoff we can lower bound. \footnote{On a
technical note we observe that this statement is given with respect to
expectation but a similar one can be made with high probability, see
Remark~\ref{rmk:expectation-vs-whp}.}

We prove the upper bound for polylogarithmic degree kernels by constructing a
new polynomial approximation to ReLU (Theorem~\ref{thm:relu-approximant}).  The key insight here is that
approximating the ReLU in infinity-norm is difficult in large part
because it is hard to approximate ReLU at 0, its point of non-smoothness; however, in our case the precise behavior of ReLU very close to 0
is not important for getting a good regression rate. Instead, the
polynomial approximation to ReLU we design uses only $O(\log^2 n)$
degree polynomials and sacrifices optimizing accuracy in approximation
near the point of non-smoothness in favor of optimizing closeness to 0
in the negative region.  The reason that closeness to 0 is so
important is that it captures the ability of ReLU to \emph{denoise},
since the 0 region of ReLU is insensitive to small changes in the
input; essentially all commonly used neural network
activations have such a region of insensitivity (e.g. for a sigmoid,
the region far away from 0).

Our polynomial approximation to ReLU
is built using powerful complex-analytic tools from approximation
theory and may be of independent interest;
we are
not aware of a way to get this result using only generic techniques
such as FT-Mollification \cite{ft-mollification}.
%; see Theorem~\ref{thm:relu-approximant}.  Part of our proof uses a special
%mollification to ReLU with good complex-analytic properties and we are
%not aware of a way to get this result using only generic techniques
%such as FT-Mollification \cite{ft-mollification}.
%\subsection{Notation}

%Our techniques are interesting in their own right: the 

%In this paper, we attach the problem from a different point of view: namely, we wish to compare the representational power of different classes when data has \emph{a natural distribution}attached to it. In fact, here we attack a very natural distribution inspired by sparse linear regression and compressed sensing.  

%\Anote{More intro.} 

%%% Local Variables:
%%% mode: latex
%%% TeX-master: "main"
%%% End:

%\section{Upper bound via 2-Layer ReLu Network}
%This section is deferred to Appendix A of the supplement.
% \input{setup}
%\input{relu-upper-bound}

\section{Part 1: Lower Bounds for Polynomial Kernels}

In this section, we fill flesh out the lower bound results somewhat more. 
 
The lower bound instance is extremely simple: the noise distribution is $N(0,\sigma^2 Id)$
and the distribution for $Z$ is s.t. every coordinate is first chosen to be non-zero with probability $k/n$, and if it is non-zero, it's set as an independent sample from $N(0,\gamma^2)$.  

This construction makes $Z$ approximately $k$-sparse with
high probability while making its coordinates independent. We choose $A$
as an arbitrary orthogonal matrix, so $m = n$. We choose $w$
to be an arbitrary $\pm 1$ sign vector, so $w_i^2 = 1$ for every $i$.

%Note that there is always a trivial estimator which return 0 and
%makes error $\gamma^2 k$, and it can be shown estimator is essentially optimal
%unless the signal is larger than the maxima of the noise, i.e. we need that
%$\gamma >> \sigma \sqrt{\log n}$. When $\gamma >> \sigma \sqrt{\log n}$,
%e.g. $\gamma = \sigma (\log n)$, then thresholding achieves an error
%of $\sigma^2 k \log n << \gamma^2 k$ which is nontrivial. The thresholding achieves the information theoretic rate in this regime (see Remark~\ref{rmk:info-optimal}).

We will show first that linear predictors, and
then fixed low degree polynomials cannot achieve the
information-theoretic rate\footnote{See Remark~\ref{rmk:info-optimal}} of $O(\sigma^2 k \log n)$ -- in fact, 
we will show that they achieve a ``trivial'' rate. 
Furthermore, we will
show that even if the degree of our polynomials is growing with $n$,
if $d = o(\log n/\log \log n)$ the state of affairs is similar. 

 %then it is still not possible to
%achieve the information-theoretic rate for the square loss,
%$O(\sigma^2 k \log n)$ -- and in fact, the  
\subsection{Warmup: Linear Predictors} 

%We prove that low-degree kernels cannot achieve a good statistical rate. 
As a warmup, and to illustrate the main ideas of the proof techniques,
we first consider the case of linear predictors. (i.e. kernels of degree 1.) 

The main idea is to use a bias-variance trade-off: namely, we show that the linear predictor we use, say $f(x) = \langle \tilde{w},x \rangle$ 
either has to have too high of a variance (when $\|\tilde{w}\|$ is large), or otherwise has too high of a bias. (Recall, the bias captures how well 
the predictor captures the expectation.)  

We prove:
\begin{thm}\label{thm:linear-lower} For any $\tilde{w} \in \mathbb{R}^n$,
  \[   \E [(\langle \tilde{w}, X \rangle - Y)^2] \ge \gamma^2 k \frac{\sigma^2}{\gamma^2 (k/n) + \sigma^2} \]
\end{thm}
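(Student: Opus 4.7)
The plan is to set this up as an explicit bias-variance calculation, since $Y$ is a linear function of $Z$ and the only freedom we have in a linear predictor is how to weight the coordinates of $X$. The key simplification comes from the orthogonality of $A$: writing $v = A^{\top} \tilde{w}$ gives $\tilde{w} = Av$, and since $A$ is orthogonal it is a bijection $\tilde{w} \leftrightarrow v$ that preserves $\ell_2$-norm. In these coordinates,
\[ \langle \tilde{w}, X \rangle - Y = \langle v - w, Z \rangle + \langle Av, \xi \rangle, \]
so it suffices to lower bound $\E[(\langle v - w, Z\rangle + \langle Av, \xi\rangle)^2]$ over all $v \in \mathbb{R}^n$.

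Next I would exploit independence of $Z$ and $\xi$ and the fact that the two terms above have mean zero to split the expectation into a ``bias'' piece $\E[\langle v - w, Z\rangle^2]$ and a ``variance'' piece $\E[\langle Av, \xi\rangle^2]$. The variance piece is immediate: $\xi \sim N(0, \sigma^2 I)$ and $A$ orthogonal give $\E[\langle Av, \xi\rangle^2] = \sigma^2 \|v\|^2$. For the bias piece, I use that the coordinates of $Z$ are independent, each with $\E[Z_i] = 0$ and $\E[Z_i^2] = (k/n)\gamma^2$, so the cross terms vanish and
\[ \E[\langle v - w, Z \rangle^2] = \frac{k \gamma^2}{n} \sum_i (v_i - w_i)^2. \]

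At this point the problem is fully separable across coordinates: I would minimize
\[ F(v) = \frac{k\gamma^2}{n} \sum_i (v_i - w_i)^2 + \sigma^2 \sum_i v_i^2 \]
coordinate-by-coordinate. Each one-dimensional problem is a standard quadratic whose minimizer is the shrinkage $v_i^* = \frac{(k/n)\gamma^2}{(k/n)\gamma^2 + \sigma^2} w_i$. Substituting, using $w_i^2 = 1$ for all $i$, and summing over $i$ collapses the sum to a factor of $n$, leaving $n \cdot \frac{(k/n)\gamma^2 \sigma^2}{(k/n)\gamma^2 + \sigma^2} = \gamma^2 k \cdot \frac{\sigma^2}{(k/n)\gamma^2 + \sigma^2}$, which is exactly the claimed bound.

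Because every step here is either a direct computation or a diagonalization, I do not expect a serious obstacle; the main thing to be careful about is getting the orthogonal change of variables right so that one can legitimately optimize freely over $v$ rather than over $\tilde{w}$ (which is why the assumption $m = n$ and $A$ orthogonal matters), and remembering that $\E[Z_i^2]$ equals $(k/n)\gamma^2$ rather than $\gamma^2$ because of the sparsity mixture. The bound is tight, attained by the shrinkage predictor $\tilde{w}^* = A v^*$, which foreshadows the polynomial lower bounds: linear (and, later, low-degree polynomial) predictors are forced into a symmetric trade-off between shrinking too much (large bias on the non-zero coordinates) and not shrinking enough (large variance from noise on all coordinates), an issue the ReLU soft threshold sidesteps by using the non-linearity to treat the two regimes asymmetrically.
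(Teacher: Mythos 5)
Your proof is correct and follows essentially the same route as the paper: the same change of variables via orthogonality of $A$, the same bias-variance decomposition $\frac{k\gamma^2}{n}\|v-w\|_2^2 + \sigma^2\|v\|_2^2$, and the same final answer. The only difference is cosmetic in the last step: you minimize the resulting quadratic coordinate-by-coordinate, whereas the paper first fixes the radius $R = \|\tilde{w}\|_2$, uses the Pythagorean theorem to identify the optimal direction as a rescaling of $Aw$, and then optimizes over $R$; your separable one-dimensional treatment is arguably a bit more direct and avoids needing $\|Aw\|_2 = \sqrt{m}$ explicitly.
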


Before giving the proof, let us see how the theorem should be
interpreted. 

The trivial estimator which always returns 0 makes error of
order $\gamma^2 K$ and a good estimator (such as thresholding) should
instead make error of order $\sigma^2 K \log n$ when
$\gamma >> \sigma\sqrt{\log n}$. The next theorem shows that as long
as the signal to noise ratio is not \emph{too high}, more specifically
as long as $\gamma^2(k/n) = o(\sigma^2)$, any linear estimator must make
square loss of $\Omega(\gamma^2 k)$, i.e. not significantly better than the trivial 0 estimate. 

Note that the most interesting (and difficult) regime is when the signal is not too much larger than
the noise, e.g. $\gamma^2 = \sigma^2 \mbox{polylog}(n)$ in which case it is definitely true
that $\gamma^2 (k/n) << \sigma^2$.

\begin{proof}%[Proof of Theorem~\ref{thm:linearlower}]
  Note that 
  \begin{align*} \langle \tilde{w}, x \rangle - y
    = \langle \tilde{w}, A z + \xi \rangle - \langle w, z \rangle 
    = \langle A^{\top} \tilde{w} - w, z \rangle + \langle \tilde{w}, \xi \rangle
\end{align*}
which gives the following \emph{bias-variance decomposition} for the square loss:
\begin{align*}
  \E [(\langle \tilde{w}, x \rangle - y)^2 ]
  &= \E [(\langle A^{\top} \tilde{w} - w, z \rangle + \langle \tilde{w}, \xi \rangle)^2 ]   \\
  &= \E [\langle A^{\top} \tilde{w} - w, z \rangle^2 + \langle \tilde{w}, \xi \rangle^2 ] \\
  &= \frac{k}{n} \gamma^2 \|A^{\top} \tilde{w} - w\|_2^2 + \sigma^2 \|\tilde{w}\|_2^2 \\
  &= \frac{k}{n} \gamma^2 \|\tilde{w} - A w\|_2^2 + \sigma^2 \|\tilde{w}\|_2^2  
\end{align*}
where in the second-to-last step we used that the covariance matrix of $Z$ is   $\gamma^2 (k/n) I$, and in the last step we used that $A$ is orthogonal. Now observe that if we
fix $R = \|\tilde{w}\|_2$, then by the Pythagorean theorem the
minimizer of the square loss is given by the projection of $A w$
onto the $R$-dilated unit sphere, so $\tilde{w} = \sqrt{R^2/m}(A w)$ since
$\|A w\|_2 = \|w\|_2 = \sqrt{m}$.  In this case the square loss is then of the form
\[ \frac{k}{n} \gamma^2  \|\sqrt{R^2/m}(A w) - A w\|_2^2 + \sigma^2 \|\tilde{w}\|_2^2 = \frac{k}{n}\gamma^2 (R - \sqrt{m})^2 + \sigma^2 R^2  \]
and the risk is minimized when
\[ 0 = 2\frac{k}{n}\gamma^2 (R - \sqrt{m}) + 2\sigma^2 R \]
i.e. when
\[ R = \frac{\gamma^2 (k/n)}{\gamma^2 (k/n) + \sigma^2} \sqrt{m} \]
so the minimum square loss is
\[ (\sqrt{m} - R) \sigma^2 R + \sigma^2 R^2 = \sigma^2 \frac{\gamma^2 k}{\gamma^2 (k/n) + \sigma^2} \]
since $m = n$.

\end{proof}

%%% Local Variables:
%%% mode: latex
%%% TeX-master: "main"
%%% End:

%%% Local Variables:
%%% mode: latex
%%% TeX-master: "main"
%%% End:

\subsection{Main Technique for General Case: Structure of the Optimal Estimator}
%\Fnote{Should we add a remark that this kind of argument
%  also shows that the information-theoretic rate should be $\sigma^2 k$
%  via standard results on regression? It is also possible to compute
%this out I think.}
Before proceeding to the proof of the lower bound for general low-degree polynomials, we
observe that the optimal estimator for $Y = \langle w, Z \rangle$
given $X$ has a particularly simple structure. More precisely the
optimal estimator in the squared loss is the conditional expectation,
$\E[\langle w, Z \rangle | X] = \sum_i w_i \E[Z_i | X]$ so the optimal
estimator for $Y$ simply reconstructs $Z$ as well as possible
coordinate-wise and then takes an inner product with $w$. 

In our setup
the coordinates of $Z$ are independent, which allows us to show that
the optimal polynomial of degree $d$ to estimate $Y$ has no ``mixed
monomials'' when we choose the appropriate basis. This is the content of the next lemma.
\begin{lem}\label{lem:no-mixed-monomials}
  Suppose $X = A Z + \xi$ where $A$ is an orthogonal $m \times m$ matrix,
  $Z$ has independent entries and $\xi \sim N(0,\sigma^2 Id)$. 
  Then there exists a unique minimizer $f^*_d$ over all degree $d$ polynomials $f_d$ of the square-loss,
  \[ \E[(f_d(A^{\top} X) - \langle w, Z \rangle)^2]  \]
  and furthermore $f^*_d$ has no mixed monomials. In other words,
  we can write $f^*_d(A^{\top} X) = \sum_i f^*_{d,i}( (A^{\top} X)_i)$ where each
  of the $f^*_{d,i}$ are univariate degree $d$ polynomials.
\end{lem}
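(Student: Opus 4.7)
The plan is to reduce the problem to a Hilbert-space projection onto a subspace that, thanks to independence, admits a convenient product basis.

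First, I would change variables by setting $U = A^\top X = Z + A^\top \xi$. Since $A$ is orthogonal and $\xi \sim N(0,\sigma^2 I)$, the vector $A^\top \xi$ is again $N(0,\sigma^2 I)$, so its coordinates are independent. Combined with the assumption that $Z$ has independent coordinates (and $Z \perp \xi$), this yields that $U_1,\ldots,U_m$ are mutually independent, and moreover $(Z_i,U_i)$ is independent of $U_j$ for every $j \neq i$ (since $U_j$ is a function only of $Z_j$ and $(A^\top \xi)_j$, both independent of $(Z_i,\xi_i')$). Each marginal $\mu_i$ of $U_i$ has finite moments of every order in the instances of interest ($Z_i$ is sub-Gaussian and $\xi$ is Gaussian), so Gram--Schmidt on the monomials produces a sequence of orthonormal polynomials $\{p_k^{(i)}\}_{k \ge 0}$ in $L^2(\mu_i)$ with $\deg p_k^{(i)} = k$.

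Second, I would use the product structure to build a basis. By independence, the tensor products $\prod_i p_{k_i}^{(i)}(U_i)$ are orthonormal in $L^2$ of the joint distribution of $U$, and those with $\sum_i k_i \le d$ span the space $\mathcal{P}_d$ of polynomials of total degree at most $d$ in $U$. Because each $\mu_i$ has a non-atomic marginal (Gaussian noise has been added), these tensor products are linearly independent as polynomials, so they form a genuine basis of $\mathcal{P}_d$. The best degree-$d$ predictor $f^*_d(U)$ in squared loss is then the orthogonal projection of $Y = \sum_i w_i Z_i$ onto $\mathcal{P}_d$; this projection exists and is unique.

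Third, I would compute the projection coefficient-by-coefficient using independence. By linearity it suffices to project each term $w_i Z_i$. For a multi-index $(k_1,\ldots,k_m)$, the relevant Fourier coefficient is
\[
\E\!\Bigl[w_i Z_i \prod_{\ell} p_{k_\ell}^{(\ell)}(U_\ell)\Bigr] \;=\; w_i \,\E\!\bigl[Z_i\, p_{k_i}^{(i)}(U_i)\bigr] \prod_{\ell \neq i} \E\!\bigl[p_{k_\ell}^{(\ell)}(U_\ell)\bigr],
\]
where the factorization uses the independence of $(Z_i,U_i)$ from $(U_\ell)_{\ell \neq i}$. For any $\ell \neq i$ with $k_\ell \ge 1$, orthogonality against the constant polynomial $p_0^{(\ell)}$ forces $\E[p_{k_\ell}^{(\ell)}(U_\ell)] = 0$, so the coefficient vanishes unless $k_\ell = 0$ for every $\ell \neq i$. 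Hence the projection of $w_i Z_i$ lies in the span of $\{p_k^{(i)}(U_i) : 0 \le k \le d\}$, i.e.\ is a univariate degree-$d$ polynomial $f^*_{d,i}(U_i)$. Summing over $i$ gives $f^*_d(U) = \sum_i f^*_{d,i}(U_i)$, and uniqueness is inherited from the uniqueness of the Hilbert-space projection.

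The only nontrivial technical point is the preliminary verification that the orthonormal polynomials exist and actually form a basis of $\mathcal{P}_d$ (finite moments plus non-atomic marginals, both immediate here); once that is in place, the whole lemma follows from independence together with the single observation that non-constant orthonormal polynomials have zero mean.
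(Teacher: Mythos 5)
Your proof is correct and rests on the same underlying idea as the paper's — independence of the coordinates of $U = A^\top X$ forces the $L^2$ projection onto degree-$d$ polynomials to factor coordinate-wise — but you execute it somewhat differently and, frankly, more carefully. The paper first passes to the conditional expectation $\E[\langle w,Z\rangle\mid X'] = \sum_i w_i\,\E[Z_i\mid X'_i]$ and then invokes the Pythagorean theorem to project this function onto $\mathcal{P}_d$, asserting that because the unconditional error decomposes as $\sum_i w_i^2\,\E[(\E[Z_i\mid X'_i]-Z_i)^2]$ the polynomial projection also decomposes per-coordinate. That last step is stated without the justification it needs: projecting a function of $X'_i$ alone onto the space of \emph{multivariate} degree-$d$ polynomials could in principle produce cross terms, and one must argue it doesn't. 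You supply exactly the missing argument by exhibiting the tensor-product orthonormal polynomial basis $\{\prod_\ell p_{k_\ell}^{(\ell)}(U_\ell)\}$ and showing that the Fourier coefficients $\E[Z_i\prod_\ell p_{k_\ell}^{(\ell)}(U_\ell)]$ vanish whenever some $k_\ell\ge 1$ with $\ell\neq i$, because the non-constant orthonormal polynomials have mean zero. You also project $Y$ directly rather than routing through the conditional expectation; by the tower property the two projections agree, so this is only a cosmetic difference, and it lets you dispense with the Pythagorean-theorem intermediary. What each approach buys: the paper's version makes the ``coordinate-wise recovery of $Z$'' interpretation transparent (useful for Remark 4.2); your version is self-contained and makes the no-mixed-monomials conclusion a one-line consequence of the product-basis structure, which also anticipates the Hermite computations used later in Lemmas 4.3--4.4.
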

\begin{proof}
Let $X' = A^{\top} X$, so by orthogonality $X' = Z + \xi'$ where $\xi' \sim N(0, \sigma^2 Id)$. Observe that if we look at the optimum over all functions $f$, we see that
\begin{align*}
  \min_f\E[(f(X') - \langle w, Z \rangle)^2]
  &= \E[(\E[\langle w, Z \rangle | X'] - \langle w, Z \rangle)^2] \\
  &= \E[(\sum_i w_i \E[Z_i | X'] - \langle w, Z \rangle)^2] \\
  &= \E[(\sum_i w_i \E[Z_i | X'_i] - \langle w, Z \rangle)^2].
\end{align*}
where where in the first step we used that the conditional expectation
minimizes the squared loss,
in the second step we used linearity of conditional expectation,
and in the last step we used that $Z_i$ is independent of $X'_{\ne i}$.

By the Pythagorean theorem,
the optimal degree $d$ polynomial $f^*_d$ is just the projection of
$\sum_i w_i \E[Z_i | X'_i]$ onto the space of degree $d$ polynomials.
On the other hand observe that 
\[ \E[(\sum_i w_i \E[Z_i | X'_i] - \langle w, Z \rangle)^2] = \sum_i w_i^2 \E[(\E[Z_i | X'_i] - Z_i)^2] \]
so the optimal projection $f^*_d$ is just $\sum_i w_i f^*_{i,d}(X'_i)$ where $f^*_{i,d}$ is just the projection of each of the $\E[Z_i | X'_i]$. Therefore $f^*_d$ has no mixed monomials.
\end{proof}
\begin{rmk}\label{rmk:info-optimal}
  The previous calculation shows that the problems of minimizing the
  squared loss for predicting $Y$ is equivalent to that of minimizing
  the squared loss for the sparse regression problem of recovering
  $Z$. It is a well-known fact that the
  information theoretic rate for sparse regression (with our
  normalization convention) is $\Theta(\sigma^2 k)$ (see for example
  \cite{rigollet-hdstats}), and so the information-theoretic rate for
  predicting $Y$ is the same, and is matched by
  Theorem~\ref{thm:predicting-Y}. In our particular model it is also
  possible to compute this directly, since one can find an
  explicit formula for $\E[Z_i | X'_i]$ using Bayes rule.
\end{rmk}

\subsection{Lower Bounds for Polynomial Kernels}

The lower bound for polynomials combines the observation
of Lemma~\ref{lem:no-mixed-monomials} with a more general
analysis of bias-variance tradeoff using Fourier
analysis on orthogonal polynomials. Concretely, since the noise we chose for the
lower bound instance is Gaussian, the most convenient basis will be the Hermite polynomals. 

%though it will be more convenient to work in a different basis -- 
%more concretely, given that the noise is Gaussian, it will be most convenient to work in with the basis of Hermite polynomials. We fix $w$ to be an arbitrary vector with $\pm 1$ entries.

Recall that the \emph{probabilist's Hermite polynomial} $He_n(x)$ can be defined by the recurrence relation
\begin{equation}\label{eqn:He-recurrence}
  \He_{n + 1}(x) = x \He_n(x) - n \He_{n - 1}(x).
\end{equation}
where $\He_0(x) = 1, \He_1(x) = x$.
In terms of this, the \emph{normalized Hermite polynomial} $H_n(x)$ is
\[ H_n(x) = \frac{1}{\sqrt{n!}} \He_n(x). \]
Let $H_{\mathbf{n}}(x)$ for a vector of indices $\mathbf{n} \in \mathbb{N}^m_0$ denote the multivariate polynomial 
$\Pi_{i=1}^m H_{\mathbf{n}_i} (x_i)$. It's easy to see the polynomials $H_{\mathbf{n}}(x)$ form an orthogonal basis with respect to the standard m-variate Gaussian distribution. 
As a consequence, we get 
$$\E_{X \sim \mathcal{N}(0, \sigma^2 I}) H_{\mathbf{n}}(X/\sigma) H_{\mathbf{n'}}(X/\sigma) =   
\begin{cases}
  0, \mbox{ if } \mathbf{n} \neq \mathbf{n'} \\
  1, \mbox{ otherwise}
% doubt this version (without dividing by sigma on lhs) is true...
%(\frac{1}{\sigma^2})^{|\mbox{supp}(\mathbf{n})|}, \mbox{ otherwise}  
\end{cases} 
$$
which gives us Plancherel's theorem:
\begin{thm}[Plancherel in Hermite basis]
  Let $f(x) = \sum_{\mathbf{n}} \widehat{f}(\mathbf{n}) H_{\mathbf{n}}(x/\sigma)$, then
  \[ \E_{X \sim \mathcal{N}(0, \sigma^2 I})[|f(X)|^2] = \sum_{\mathbf{n}} |\widehat{f}(\mathbf{n})|^2 \]
\end{thm}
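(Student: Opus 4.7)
The plan is to derive Plancherel's identity as an immediate consequence of the orthonormality relation that is stated right before the theorem, namely
\[ \E_{X \sim \mathcal{N}(0, \sigma^2 I)}\bigl[H_{\mathbf{n}}(X/\sigma) H_{\mathbf{n}'}(X/\sigma)\bigr] = \mathbbm{1}\{\mathbf{n} = \mathbf{n}'\}. \]
Given the expansion $f(x) = \sum_{\mathbf{n}} \widehat{f}(\mathbf{n}) H_{\mathbf{n}}(x/\sigma)$, I would first write
\[ |f(X)|^2 = \sum_{\mathbf{n}, \mathbf{n}'} \widehat{f}(\mathbf{n})\widehat{f}(\mathbf{n}') H_{\mathbf{n}}(X/\sigma) H_{\mathbf{n}'}(X/\sigma), \]
then take expectations and swap the (finite, assuming $f$ is a polynomial, which is the only case that is actually used for the later Fourier-analytic lower bound) sum with the expectation by linearity.

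The orthonormality relation then kills every off-diagonal term $\mathbf{n} \ne \mathbf{n}'$ and turns every diagonal term into $\widehat{f}(\mathbf{n})^2$, so the double sum collapses to $\sum_{\mathbf{n}} |\widehat{f}(\mathbf{n})|^2$, which is exactly the claimed identity. The only thing to check is the orthonormality relation itself, but this is already recorded just above the theorem; it follows because the normalized univariate Hermite polynomials $\{H_k\}_{k \ge 0}$ form an orthonormal basis of $L^2(\mathbb{R}, \gamma)$ where $\gamma$ is the standard Gaussian measure (this is the standard content of the Hermite recurrence \eqref{eqn:He-recurrence}), and the multivariate statement then factors coordinate-wise using the product structure of $\mathcal{N}(0,\sigma^2 I)$ together with the change of variables $x \mapsto x/\sigma$ which turns $\mathcal{N}(0,\sigma^2 I)$ into the standard Gaussian.

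Since the expansion of $f$ in the Hermite basis is used only for polynomials $f$ in the rest of the paper, there is no subtle convergence or approximation question to worry about; the interchange of sum and expectation is justified because both sums are finite. (If one wanted the statement for arbitrary $f \in L^2(\mathcal{N}(0,\sigma^2 I))$, one would take an $L^2$-limit of polynomial truncations and use the density of polynomials in this $L^2$ space, but this is not needed here.) In short, I do not expect a real obstacle: the proof is a one-line computation using orthonormality, and the only content is the orthonormality relation itself, which is a classical fact about Hermite polynomials.
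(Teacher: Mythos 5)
Your proof is correct and matches the paper's intent exactly: the paper presents this theorem as an immediate consequence of the orthonormality relation displayed just above it (``which gives us Plancherel's theorem''), and your expansion of $|f(X)|^2$ followed by linearity of expectation and orthonormality is precisely that one-line argument. Your remark that the sum is finite for polynomials, so the interchange is trivial, is also the right observation for the setting the paper actually uses.
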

We can use Plancherel's theorem to get lower
bounds on the noise sensitivity of degree $d$
polynomials. This will be an analogue of the variance. 

\begin{lem}\label{lem:poly-variance} [Variance analogue in Hermite basis]
  Let $f(x) = \sum_{\mathbf{n}} \widehat{f}(\mathbf{n}) H_{\mathbf{n}}(x/\sigma)$
  and let $f_{\ne 0} := f - \widehat{f}(0)$.
  Then
\[ \E [(f(A^{\top} X) - Y)^2 ] \ge (1 - k/n) \|\widehat{f}_{\ne 0}\|_2^2  \]
\end{lem}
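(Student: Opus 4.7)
The plan is to combine a conditional variance reduction with a Plancherel computation in the Hermite basis, making essential use of the fact that $Y$ depends only on $Z$ and that $\xi'$ is independent Gaussian.

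First I would condition on $Z$ and apply the usual bias-variance decomposition. Since $Y = \langle w, Z\rangle$ is deterministic given $Z$, while $\xi'$ is independent $N(0,\sigma^2 I)$, for every fixed $Z$ we have
\[
\E_{\xi'}\bigl[(f(Z+\xi')-Y(Z))^2\bigr] \;=\; \Var_{\xi'}\bigl[f(Z+\xi')\bigr] + \bigl(\E_{\xi'}[f(Z+\xi')] - Y(Z)\bigr)^2 \;\geq\; \Var_{\xi'}\bigl[f(Z+\xi')\bigr],
\]
and taking expectation over $Z$ reduces the claim to lower bounding $\E_Z[\Var_{\xi'}(f(Z+\xi'))]$ by $(1-k/n)\|\widehat{f}_{\neq 0}\|_2^2$.

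Second, I would use the Hermite shift formula $H_n((z+u)/\sigma) = \sum_k \sqrt{\binom{n}{k}/(n-k)!}\,(z/\sigma)^{n-k}H_k(u/\sigma)$, applied coordinatewise, to expand
\[
f(Z+\xi') \;=\; \sum_{\mathbf{k}} B_\mathbf{k}(Z)\, H_\mathbf{k}(\xi'/\sigma), \qquad B_\mathbf{k}(Z) = \sum_{\mathbf{n}\geq \mathbf{k}} \widehat{f}(\mathbf{n})\sqrt{\tbinom{\mathbf{n}}{\mathbf{k}}/(\mathbf{n}-\mathbf{k})!}\,(Z/\sigma)^{\mathbf{n}-\mathbf{k}}.
\]
Since $\{H_\mathbf{k}(\xi'/\sigma)\}$ is an orthonormal system in $L^2(\mathrm{law}(\xi'))$, Plancherel (applied in $\xi'$ for each fixed $Z$) gives
\[
\Var_{\xi'}[f(Z+\xi')] \;=\; \sum_{\mathbf{k}\neq 0} B_\mathbf{k}(Z)^2.
\]

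Third, I would take the expectation in $Z$ and compare to $\|\widehat{f}_{\neq 0}\|_2^2 = \sum_{\mathbf{k}\neq 0}|\widehat{f}(\mathbf{k})|^2$. The constant term of $B_\mathbf{k}(Z)$ is exactly $\widehat{f}(\mathbf{k})$, so its squared contribution is $|\widehat{f}(\mathbf{k})|^2$, recovering the target $\|\widehat{f}_{\neq 0}\|_2^2$ after summing over $\mathbf{k}\neq 0$. All remaining contributions to $\E_Z[B_\mathbf{k}(Z)^2]$ are products of Hermite coefficients weighted by moments $\E[(Z/\sigma)^\mathbf{j}]$ with $\mathbf{j}\neq 0$; by coordinatewise independence and $\Pr(Z_i\neq 0)=k/n$, each such moment carries an explicit factor of $k/n$ for every nonzero coordinate of $\mathbf{j}$. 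Thus
\[
\sum_{\mathbf{k}\neq 0}\E_Z[B_\mathbf{k}(Z)^2] \;=\; \|\widehat{f}_{\neq 0}\|_2^2 \;+\; \tfrac{k}{n}\cdot R,
\]
and it remains to show $R \geq -\|\widehat{f}_{\neq 0}\|_2^2$.

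The main obstacle is bounding this residual $R$. The cross terms can a priori be of either sign, so one has to use either Cauchy–Schwarz on the products $\widehat{f}(\mathbf{k}+\mathbf{j})\widehat{f}(\mathbf{k}+\mathbf{j}')$ together with bounds on the Hermite recoupling coefficients, or equivalently reorganize $\sum_{\mathbf{k}\neq 0}\E_Z[B_\mathbf{k}(Z)^2] = \E[f(A^\top X)^2] - \E_Z[(Uf)(Z)^2]$ (with $Uf(Z)=\E_{\xi'}f(Z+\xi')$) and expand both quantities in Hermite basis to exhibit the required cancellations. This combinatorial bookkeeping is where all the technical work lies; once it is done, the factor $(1-k/n)$ emerges from the single overall $k/n$ extracted from the non-diagonal moments.
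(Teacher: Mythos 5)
Your proposal takes a genuinely different route from the paper, but it has a real gap, and the route you chose makes the argument much harder than it needs to be.

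The paper's proof is shorter and structural: fix $Z$ with support $S$, write $f_Z(\xi) := f(Z+\xi) - y$, and observe that the Hermite coefficient $\widehat{f_Z}(\mathbf{m})$ with respect to $\xi$ coincides with $\widehat{f}(\mathbf{m})$ whenever $\supp(\mathbf{m}) \not\subset S$ (the shift by $Z$ only mixes Hermite indices along coordinates in $S$, and for the no-mixed-monomial $f$ produced by Lemma~\ref{lem:no-mixed-monomials} no such $\mathbf{m}$ receives a contribution from a different $\mathbf{n}$). Hence $\E_{\xi'}[(f(Z+\xi')-y)^2] = \sum_{\mathbf{m}} |\widehat{f_Z}(\mathbf{m})|^2 \ge \sum_{\mathbf{m}:\supp(\mathbf{m})\not\subset S} |\widehat{f}(\mathbf{m})|^2$, and the $(1-k/n)$ factor falls out as the probability, for fixed $\mathbf{m}\ne 0$, that $\supp(\mathbf{m})\not\subset S$. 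There is no cross-term bookkeeping at all.

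Your proof, by contrast, stops short of a proof at exactly the step that carries all the difficulty: you never establish $R \ge -\|\widehat{f}_{\ne 0}\|_2^2$, and the heuristic that the factor $(1-k/n)$ ``emerges from the single overall $k/n$ extracted from the non-diagonal moments'' does not go through as stated. The $k/n$ you extract multiplies a sum of products of Hermite recoupling coefficients and moments $\E[(Z/\sigma)^{\mathbf{j}}]$, which scale like powers of $\gamma/\sigma$ and of the degree, and these are not controlled by $\|\widehat{f}_{\ne 0}\|_2^2$ without a separate argument. The cross terms also come with both signs, so Cauchy--Schwarz alone will not give a one-sided bound of the required form; this is precisely the cancellation the paper's argument sidesteps by never introducing cross terms. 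A further caution: without the no-mixed-monomial structure, the displayed inequality in the lemma is not even true pointwise in $Z$ (the shift operator, viewed as an upper-triangular-with-ones matrix in each $S^c$-slice, is not norm-nondecreasing), so a purely ``general $f$'' expansion cannot succeed. If you do keep the bias-variance reduction and use the fact that $f = \sum_i f_i(x_i)$, then $\Var_{\xi'}(f(Z+\xi')) = \sum_i \Var_{\xi'_i}(f_i(Z_i+\xi'_i))$ and with probability $1-k/n$ each $Z_i = 0$, so each summand's expectation is at least $(1-k/n)\Var_{\xi'_i}(f_i(\xi'_i))$; that gives the lemma in three lines and is the honest version of your approach. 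But the expansion into $B_{\mathbf{k}}(Z)$ with unresolved cross terms, as written, is not a proof.
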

\begin{proof}
  First suppose $Z$, and thus $y$, is fixed.  Let $S$ denote the
  support of $Z$. Recall that $A^{\top} x = Z + \xi'$ where
  $\xi' \sim \mathcal{N}(0,\sigma^2 I_{n \times n})$. Define
  $f_Z(\xi) := f(Z + \xi) - y$, then by Plancherel
  \[ \E_{\xi}[(f(A^{\top} x) - y)^2] = \E_{\xi'}[f_Z(\xi')^2] = \sum_{\mathbf{n}} |\widehat{f_Z}(\mathbf{n})|^2 \]
  Furthermore
  \[ \sum_{\mathbf{n}} |\widehat{f_Z}(\mathbf{n})|^2 \ge
    \sum_{\mathbf{n} : supp(\mathbf{n}) \not\subset S}
    |\widehat{f}(\mathbf{n})|^2 \] because
  $(\xi' + Z)|_{S^c} = \xi'|_{S^c}$ so by expanding out $f_Z$ in terms
  of the fourier expansion of $f$, we see
  $\widehat{f_Z}(\mathbf{n}) = \widehat{f}(\mathbf{n})$ for
  $\mathbf{n}$ such that $supp(\mathbf{n}) \not\subset S$.
  Finally the probability $\mathbf{n} \subset S$ for $\mathbf{n} \ne 0$
  is upper bounded by the probability a single element of its support
  is in $S$, which is $k/n$.
\end{proof}

Next we give a lower bound for the bias, showing that if $\|\widehat{f}_{\ne 0}\|_2^2$ is small for a low-degree polynomial, it cannot accurately predict $y$. Here we will assume $f$ is of the form given by Lemma~\ref{lem:no-mixed-monomials}.
\begin{lem}[Low variance implies high bias]\label{lem:poly-bias}
  Suppose $f$ is a multivariate polynomial of degree $d$ with no
  mixed monomials, i.e.  $f(x) = \sum_i f_{i}(x_i)$ where $f_i$ is a univariate polynomial of degree $d$.
  Expand $f$ in terms of Hermite polynomials as $f(x) = \sum_{\mathbf{n}} \widehat{f}(\mathbf{n}) H_{\mathbf{n}}(x/\sigma)$. Then
  % \[  \E [(f(A^{\top} X) - Y)^2 ] \ge 1 - C^d\|\widehat{f}_{\ne 0}\|_2^2/\sigma^2 \]
\[  \E [(f(A^{\top} X) - Y)^2 ] \ge (k/n) \sum_{i = 1}^n w_i^2 \max(0, \gamma - \sqrt{\sum_{i = 1}^n |\hat{f}(k e_i)|^2} (d + 1)^{3d + 2} (1 + (\gamma/\sigma)^d))^2 \]
\end{lem}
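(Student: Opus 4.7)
The plan is to reduce to a per-coordinate analysis and then use a pointwise bound on Hermite polynomials to show that if $f$'s Hermite coefficients are small, it cannot track the signal $w_i Z_i$ on typical draws. Concretely, I would first collapse the multivariate problem to independent univariate problems using the no-mixed-monomial structure from Lemma~\ref{lem:no-mixed-monomials}, and then within each coordinate compare the best possible ``match'' of a low-degree polynomial to a linear signal against the Hermite-norm budget we have available.

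First, writing $X' := A^{\top} X = Z + \xi'$ with $\xi' \sim N(0, \sigma^2 I)$, I can express $f(X') = \sum_i f_i(X'_i)$; set $E_i := f_i(X'_i) - w_i Z_i$. Since the pairs $(Z_i, \xi'_i)$ are independent across $i$, the $E_i$ are independent, so
\[\E[(f(X') - Y)^2] = \E\left[\left(\sum_i E_i\right)^2\right] \ge \Var\left(\sum_i E_i\right) = \sum_i \Var(E_i).\]
By the law of total variance, $\Var(E_i) \ge \Var(g_i(Z_i) - w_i Z_i)$, where $g_i(z) := \E_{\xi'_i}[f_i(z + \xi'_i)]$. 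Since $Z_i = 0$ with probability $1 - k/n$ and $Z_i \sim N(0, \gamma^2)$ otherwise, I would lower bound this variance (after absorbing the constant $g_i(0)$, which does not change the variance) by a constant times $(k/n)\cdot\E_{Z_i \sim N(0,\gamma^2)}[(g_i(Z_i) - w_i Z_i)^2]$.

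Next, I would derive a pointwise bound on $g_i$. Expanding $f_i(y) = \sum_{k=0}^d \hat f_i(k) H_k(y/\sigma)$ and applying the binomial shift identity $\He_k(\mu + v) = \sum_{j=0}^k \binom{k}{j} \He_j(\mu) v^{k-j}$, then integrating out $v := \xi'_i/\sigma$ using Gaussian moments, I get $g_i(z) = \hat f_i(0) + \sum_{k \ge 1} \hat f_i(k) P_k(z/\sigma)$ with each $P_k$ a polynomial of degree $k$. A crude bound on the coefficients of $P_k$ (combining the binomial factors, the double factorials from Gaussian moments, and the $1/\sqrt{k!}$ Hermite normalization) yields $|P_k(y)| \le (d+1)^{3d+2}(1 + |y|^d)$ for all $k \le d$, so Cauchy--Schwarz gives
\[|g_i(z) - \hat f_i(0)| \le \sqrt{\sum_{k \ge 1} |\hat f_i(k)|^2}\cdot (d+1)^{3d+2}(1 + (|z|/\sigma)^d).\]
Setting $V := \sqrt{\sum_i \sum_{k\ge 1} |\hat f(k e_i)|^2}$ to match the quantity in the lemma, on $Z_i \sim N(0, \gamma^2)$ we have $|Z_i| \gtrsim \gamma$ with constant probability, while $|g_i(Z_i) - \hat f_i(0)|$ is at most $V(d+1)^{3d+2}(1 + (\gamma/\sigma)^d)$. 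The reverse triangle inequality then yields $|g_i(Z_i) - w_i Z_i - \hat f_i(0)| \ge |w_i||Z_i| - V(d+1)^{3d+2}(1 + (\gamma/\sigma)^d)$; squaring, summing over $i$ with the $w_i^2$ factor, and folding in the $k/n$ prefactor produces the claimed bound.

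\paragraph{Main obstacle.} The trickiest step will be obtaining a pointwise bound on the shifted polynomials $P_k$ tight enough to match the stated factor $(d+1)^{3d+2}(1 + (\gamma/\sigma)^d)$: this requires careful accounting for binomial coefficients, Gaussian moments, and factorial normalizations, and I expect reproducing the precise exponents in the exponent $3d+2$ will take some care. Conceptually, the main insight is that the natural quantity to lower bound here is the variance rather than the raw squared error, since it is additive across independent coordinates and cleanly splits via the law of total variance when we further condition on $\{Z_i \ne 0\}$.
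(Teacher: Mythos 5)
Your proposal is a genuinely different route from the paper's. The paper never uses a pointwise bound on $g_i$: it applies conditional Jensen, then expands $g_i(Z_i) := \E[f_i(X_i')\mid Z_i] - Z_i$ in the Hermite basis $\{H_k(Z_i/\gamma)\}$, uses Plancherel to lower bound the conditional squared error by $|\hat g_i(1)|^2$ alone, and then computes $\hat g_i(1) = -\gamma + \E[\E[f_i(X_i')\mid Z_i]\,(Z_i/\gamma)]$ by a direct orthogonality integral, bounding the second term by Cauchy--Schwarz in the $\hat f_i(k)$ coefficients. Your approach instead decomposes via variance ($\E[(\sum_i E_i)^2]\ge\sum_i\Var(E_i)$, then law of total variance and the two-point mixture identity $\Var(W) \ge (k/n)(1-k/n)\,\E_{Z\sim N(0,\gamma^2)}[(g_i(Z)-w_iZ-g_i(0))^2]$), and closes the argument with a pointwise bound on $g_i(z)-g_i(0)$ together with Gaussian anti-concentration of $|Z_i|$ around $\gamma$.

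A few remarks on the trade-offs and on tightening your write-up. First, your variance route is arguably more robust than the paper's opening step, since $\E[W^2]\ge\Var(W)$ and additivity of variance over independent summands hold for arbitrary $f$, whereas the paper's equality $\E[(f(X')-Y)^2]=\sum_i w_i^2\E[(f_i(X_i')-Z_i)^2]$ implicitly needs the residuals to be mean-zero (true for the \emph{optimal} no-mixed-monomial $f$ from Lemma~\ref{lem:no-mixed-monomials}, which is where the lemma is applied, but not for arbitrary $f$). Second, what you call the ``main obstacle'' is actually much easier than you expect: the Gaussian smoothing of the Hermite basis collapses to monomials via the generating-function identity $\E_{v\sim N(0,1)}[\He_k(y+v)]=y^k$, so $g_i(z)=\sum_{k\ge0}\hat f_i(k)\,(z/\sigma)^k/\sqrt{k!}$ and Cauchy--Schwarz gives $|g_i(z)-\hat f_i(0)|\le\sqrt{d}\,V_i\,(1+(|z|/\sigma)^d)$ directly, with no binomial bookkeeping and a much smaller prefactor than $(d+1)^{3d+2}$. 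Third, be aware that anti-concentration only places a constant fraction of the mass of $Z_i$ in an annulus $c_1\gamma\le|Z_i|\le c_2\gamma$, so your final bound comes out as a constant times $\max(0,\,c_1\gamma - V_i\,(\cdots)(1+(c_2\gamma/\sigma)^d))^2$ rather than the exact displayed constants; this is fine for the application (the $c_2^d$ is absorbed into the $(d+1)^{3d+2}$ slack) but should be stated. You should also justify the law-of-total-variance step explicitly --- as written, ``absorbing the constant $g_i(0)$'' risks reading as $\Var\gtrsim\E[(\cdot-g_i(0))^2]$, which is backwards; the correct inequality is the mixture-variance identity above, which does hold with constant $1-k/n\ge1/2$.
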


Before proving the lemma, let us see how it proves the main theorem: 
\begin{proof}[Proof of Theorem~\ref{thm:poly-lower-bound}]
  By Lemma~\ref{lem:no-mixed-monomials}, Lemma~\ref{lem:poly-variance}, and Lemma~\ref{lem:poly-bias} we have that for the $f$ which minimizes the square
  loss among degree $d$ polynomials, we have a variance-type lower bound
  \[ \E[(f(A^{\top} X) - Y)^2] \ge (1 - k/n) \sum_{i = 1}^n \sum_{k = 1}^d |\hat{f}(k e_i)|^2 \]
  and (using that $w_i^2 = 1$ to simplify) a bias-type lower bound
  \[  \E [(f(A^{\top} X) - Y)^2 ] \ge (k/n) \sum_{i = 1}^n \max(0, \gamma - \sqrt{\sum_{i = 1}^n |\hat{f}(k e_i)|^2} (d + 1)^{3d + 2} (1 + (\gamma/\sigma)^d))^2. \]
  Let $\|\widehat{f_i}\|_2 := \sqrt{\sum_{i = 1}^n |\hat{f}(k e_i)|^2}$. Then
  averaging these lower bounds and simplifying using $k < n/2$ gives
  \begin{align*}
    \E [(f(A^{\top} X) - Y)^2 ]
    &\ge (1/4) \sum_{i = 1}^n \max(\|\widehat{f_i}\|_2, \sqrt{k/n} (\gamma - \|\widehat{f_i}\|_2 (d + 1)^{3d + 2} (1 + (\gamma/\sigma)^d)))^2 \\
    &\ge (1/4) \sum_{i = 1}^n \frac{\gamma^2 (k/n)}{(1 + \sqrt{k/n}(d + 1)^{3d + 2} (1 + (\gamma/\sigma)^d))^2} \\
    &\ge (1/4) \frac{\gamma^2 k}{(1 + \sqrt{k/n}(d + 1)^{3d + 2} (1 + (\gamma/\sigma)^d))^2}
  \end{align*}
\end{proof}

Returning to the proof of Lemma~\ref{lem:poly-bias}, we have:
\begin{proof}[Proof of Lemma~\ref{lem:poly-bias}]
  Since $f$ has no mixed monomials, we get for the Hermite expansion
  that $\widehat{f}(\mathbf{n}) = 0$ unless $|supp(\mathbf{n})| \le 1$.
  Let $X' := A^{\top} X = Z + \xi'$ where $\xi'\sim N(0,\sigma^2 I)$.
  Next observe by independence that
  \[ \E[(f(X') - Y)^2] = \sum_i w_i^2 \E[(f_i(X'_i) - Z_i)^2] \ge (k/n) \sum_i w_i^2\E[(f_i(X'_i) - Z_i)^2 | Z_i \ne 0]  \]
  where the last inequality follows since there is a $k/n$ chance that
  $Z_i \sim N(0, \sigma^2 I)$, equivalently that $Z_i \ne 0$. By the
  conditional Jensen's inequality we have
  \[ (k/n) \sum_i w_i^2\E[(f_i(X'_i) - Z_i)^2 | Z_i \ne 0] \ge (k/n)
    \sum_i w_i^2 \E[(\E[f_i(X'_i) | Z_i] - Z_i)^2 | Z_i \ne 0]. \]
  Observe that $f_i(X'_i) = \sum_{k = 0}^d \widehat{f}(k e_i) H_{k}(Z_i/\sigma +
  \xi/\sigma)$ and let $g_i(Z_i) := \E[f_i(X'_i) | Z_i] - Z_i$,
  so then $g_i$ is a polynomial of degree $d$ in $Z_i$. Write the
  Hermite polynomial expansion of $g_i$ in terms of $H_k(Z_i/\gamma)$ as
  \[ g_i(x) = \sum_{k = 0}^d \widehat{g_i}(k) H_k(Z_i/\gamma), \]
  then by Plancherel's formula
\[ (k/n) \sum_i w_i^2 \E[(\E[g(Z_i) - Z_i)^2 | Z_i \ne 0]  = (k/n) \sum_i w_i^2 \sum_{k = 0}^d |\widehat{g_i}(k)|^2 \ge (k/n) \sum_i w_i^2 |\widehat{g_i}(1)|^2 \]
and it remains to lower bound $|\widehat{g_i}(1)|$. By orthogonality and direct computation,
\[ \widehat{g_i}(1) = \E_{Z_i \sim N(0,\gamma)}[(\E[f_i(X'_i) | Z_i] - Z_i) H_1(Z_i/\gamma)] = -\gamma + \E_{Z_i \sim N(0,\gamma)}[\E[f_i(X'_i) | Z_i](Z_i/\gamma)]. \]
Now we upper bound the last term
\begin{align*}
  \E_{Z_i \sim N(0,\gamma)}[\E[f_i(X'_i) | Z_i](Z_i/\gamma)]
  &= \widehat{f}(0) \E[Z_i/\gamma] + \sum_{k = 1}^d \widehat{f}(k e_i) \E_{Z_i \sim N(0,\gamma)}[ \E[H_{k}(Z_i/\sigma +
    \xi'/\sigma) | Z_i](Z_i/\gamma)] \\
  &= \sum_{k = 1}^d \widehat{f}(k e_i) \E_{Z_i \sim N(0,\gamma)}[H_{k}(Z_i/\sigma +
    \xi'/\sigma)(Z_i/\gamma)] \\
  &\le \left(\sum_{k = 1}^d |\widehat{f}(k e_i)|^2\right)^{1/2} \left(\sum_{k = 1}^d \E_{Z_i \sim N(0,\gamma)}[H_{k}(Z_i/\sigma + \xi'/\sigma)(Z_i/\gamma)]^2\right)^{1/2}
\end{align*}
where the second equality is by the law of total expectation and the last inequality is Cauchy-Schwarz. Using the recurrence relation \eqref{eqn:He-recurrence}, we can bound the sum of the absolute value of the coefficients of $H_k(x)$ by $k^k/\sqrt{k!} \le k^k$. We can also bound the moments of the absolute value of a Gaussian by $\E_{\xi \sim N(0,1)}[|\xi|^k] \le k^k$. Therefore by Holder's inequality
\begin{comment}
\begin{align*}
  \E_{Z_i \sim N(0,\gamma)}[H_{k}(Z_i/\sigma + \xi'/\sigma)(Z_i/\gamma)]
  &\le k^k \sup_{\ell = 1}^k|\E_{Z_i \sim N(0,\gamma)}[(Z_i/\sigma + \xi'/\sigma)^{\ell} (Z_i/\gamma)]| \\
  &= k^k (\sigma/\gamma)\sup_{\ell = 1}^k |\E_{Z_i \sim N(0,\gamma)}[\sum_{j = 1}^{\ell} {\ell \choose j} (Z_i/\sigma)^{j + 1} (\xi'/\sigma)^{\ell - j}]| \\
  &\le k^k (\sigma/\gamma)\sup_{\ell = 1}^k \sum_{j = 1}^{\ell} {\ell \choose j} (j + 1)^{j + 1}(\gamma/\sigma)^{j + 1} (\ell - j)^{\ell - j}
\end{align*}

Q: why do I only have ratios of $\gamma$ and $\sigma$? Obviously the
total error increases if we increase both $\gamma$ and $\sigma$. NVM
this is fine. The point is the error is bounded below by an absolute
constant, and is not shrinking with the sample size? It looks like this is what basically happened in the linear case anyway. (the $\sigma^2$ terms cancelled).
\end{comment}
\begin{align*}
  \E_{Z_i \sim N(0,\gamma)}[H_{k}(Z_i/\sigma + \xi'/\sigma)(Z_i/\gamma)]
  &\le k^k \sup_{\ell = 1}^k|\E_{Z_i \sim N(0,\gamma)}[(Z_i/\sigma + \xi'/\sigma)^{\ell} (Z_i/\gamma)]| \\
  &\le k^k \sup_{\ell = 1}^k\E_{Z_i \sim N(0,\gamma)}[(|Z_i|/\sigma + |\xi'|/\sigma)^{\ell} (|Z_i|/\gamma)] \\
  &\le 2^k k^k \sup_{\ell = 1}^k(\E_{Z_i \sim N(0,\gamma)}[|Z_i|^{\ell + 1}/\sigma^{\ell}\gamma] + \E_{Z_i \sim N(0,\gamma)}[|\xi'|^{\ell} |Z_i|/\sigma^{\ell}\gamma)]) \\
  &\le 2^k k^k [\max(1,(\gamma/\sigma)^{k}) (k + 1)^{k + 1} + k^k] \\
  &\le (k + 1)^{3k + 1} (1 + (\gamma/\sigma)^{k}).
\end{align*}
Therefore by reverse triangle inequality
\[ |\hat{g_i}(1)|^2 \ge \max(0, \gamma - \sqrt{\sum_{i = 1}^n |\hat{f}(k e_i)|^2} (d + 1)^{3d + 2} (1 + (\gamma/\sigma)^d))^2. \]

\end{proof}
%Optimizing the lower bounds gives the result:
%\begin{thm}\label{thm:poly-lower-bound}
%  Suppose $k < n/2$ and $f$ is a multivariate degree $d$ polynomial. Then
%\[ \E[(f(A^{\top} X) - Y)^2] \ge (1/4) \frac{\gamma^2 k}{(1 + \sqrt{k/n}(d + 1)^{3d + 2} (1 + (\gamma/\sigma)^d))^2} \]
%\end{thm}
%\begin{proof}
%  By Lemma~\ref{lem:no-mixed-monomials}, Lemma~\ref{lem:poly-variance}, and Lemma~\ref{lem:poly-bias} we have that for the $f$ which minimizes the square
%  loss among degree $d$ polynomials, we have a variance-type lower bound
%  \[ \E[(f(A^{\top} X) - Y)^2] \ge (1 - k/n) \sum_{i = 1}^n \sum_{k = 1}^d |\hat{f}(k e_i)|^2 \]
%  and (using that $w_i^2 = 1$ to simplify) a bias-type lower bound
%  \[  \E [(f(A^{\top} X) - Y)^2 ] \ge (k/n) \sum_{i = 1}^n \max(0, \gamma - \sqrt{\sum_{i = 1}^n |\hat{f}(k e_i)|^2} (d + 1)^{3d + 2} (1 + (\gamma/\sigma)^d))^2. \]
%  Let $\|\widehat{f_i}\|_2 := \sqrt{\sum_{i = 1}^n |\hat{f}(k e_i)|^2}$. Then
%  averaging these lower bounds and simplifying using $k < n/2$ gives
%  \begin{align*}
%    \E [(f(A^{\top} X) - Y)^2 ]
%    &\ge (1/4) \sum_{i = 1}^n \max(\|\widehat{f_i}\|_2, \sqrt{k/n} (\gamma - \|\widehat{f_i}\|_2 (d + 1)^{3d + 2} (1 + (\gamma/\sigma)^d)))^2 \\
%    &\ge (1/4) \sum_{i = 1}^n \frac{\gamma^2 (k/n)}{(1 + \sqrt{k/n}(d + 1)^{3d + 2} (1 + (\gamma/\sigma)^d))^2} \\
%    &\ge (1/4) \frac{\gamma^2 k}{(1 + \sqrt{k/n}(d + 1)^{3d + 2} (1 + (\gamma/\sigma)^d))^2}
%  \end{align*}
%\end{proof}

\begin{rmk}\label{rmk:expectation-vs-whp} Remarks on results: \end{rmk}

%{\bf Some remarks:} 
We make a few remarks regarding the results in this section. 
Recall that $\gamma^2 k$ is the square loss of the trivial
zero-estimator. Suppose as before that
$\gamma = \Theta(\sigma^2 polylog(n))$, then we see that if
$d = o(\log n/\log \log n)$ then the denominator of the lower bound
tends to $1$, hence any such polynomial estimator has a rate no better
than that of the trivial zero-estimate.

%\begin{rmk}\label{rmk:expectation-vs-whp}
  It is possible to derive a similar statement to
  Theorem~\ref{thm:poly-lower-bound} that holds with high probability
  instead of in expectation for polynomials of degree
  $o(\log n/\log \log n)$. All that is needed is to bound the
  contribution to the expectation from very rare tail events when the
  realization of the noise $\xi$ is atypically large. Since the
  polynomials we consider are very low degree $o(\log n/\log \log n)$,
  they can only grow at a rate of $x^d = x^{o(\log(n)/\log\log n)}$;
  thus standard growth rate estimates (e.g. the Remez inequality)
  combined with the Gaussian tails of the noise can be used to show
  that a polynomial which behaves reasonably in the high-probability
  region (e.g. which has small w.h.p. error) cannot contribute a large
  amount to the expectation in the tail region.
%\end{rmk}

\begin{comment}
THE POINT: this is like the error of the estimator that always plays zero, unless we start to take $d$ larger. Q: DOES IT WORK? The optimal estimator is making error like $\sigma^2 k$, this one is making error like $\gamma^2 k$, like the trivial estimator. Only worry is there is a risk of cancellation with the denominator? I think it works because the $\sqrt{k/n}$ damps the troublesome terms in denominator sufficiently. And we can take $\gamma/\sigma$ to be say $polylog(n)$ without any issue.
\end{comment}

\begin{comment}
We also have the following lower bound which controls the constant term:
\begin{lem}
  Let $f(x) = \sum_{\mathbf{n}} \widehat{f}(\mathbf{n}) H_{\mathbf{n}}(x/\sigma)$  
  then
\[ \E [(f(A^{\top} x) - y)^2 ] \ge \widehat{f}(0)^2 \]  
\end{lem}
\begin{proof}
  Observe that $\E[y] = 0$, therefore 
  \[ \E [(f(A^{\top} x) - y)^2 ] \ge (\E[f(A^{\top} x) - y])^2 =   \]
TODO
\end{proof}
Would this be easier if we use orthogonal polynomials for $x + h$ instead,
as andrej suggested? the fact about exponential bounds on coefficients
should be true in general. But then handling the interaction with $y = \langle  w, h \rangle$ itself
is annoying.
\end{comment}
%%% Local Variables:
%%% mode: latex
%%% TeX-master: "main"
%%% End:

%The full proof is 
%deferred to Appendix~\ref{sec:lower-bound-polynomials} of the
%supplement.
%%% Local Variables:
%%% mode: latex
%%% TeX-master: "main"
%%% End:
 % + short description of polynomial lower bound
%\input{bias-variance}

%%% Local Variables:
%%% mode: latex
%%% TeX-master: "main"
%%% End:
\section{Part 2: A Nearly Optimal Polynomial Construction}
We previously showed that for polynomials to match the statistical
performance of a 2-Layer ReLu network, the degree needs to be
$\Omega(\log n)$. In this section, we show that this is almost tight
by constructing polynomials of degree $O(\log^2 m)$. 

Our strategy is to plug in a good approximation to ReLU in the 2-layer ReLU network
construction. One might hope that simply taking the ``best polynomial
approximation'' of degree $d$ in the typical approximation theory
sense to ReLU in the interval $[-1,1]$ would suffice, but in fact this
is extremely inefficient; because ReLU is not smooth, standard results
in approximation theory (see Chapters 7,8 of \cite{constructive-approximation}) show
that a degree $d$ polynomial cannot get closer than $O(1/poly(d))$ in infinity-norm.
(And as noted before, we don't need to approximate the ReLU particularly well near the kink.)

%A polynomial rate like this would mean we need very high degree polynomials (e.g. $O(\sqrt{m})$)
%to guarantee that our construction had reasonable performance for the regression
%problem.

Instead we will carefully design an approximation to ReLu: in particular,
the polynomial we take will be extremely close to 0 in the threshold region
of the ReLu. We prove the following theorem, in which the parameter $\tau$ in our
theorem controls the trade-off between the polynomial $p_d$ being close to 0 for $x < 0$
and being close to $x$ for $x > 0$.
\begin{thm}\label{thm:relu-approximant}
  Suppose $R > 0$, $0 < \tau < 1/2$ and $d \ge 7$.
  Then there exists a polynomial $p_d = p_{d,\tau,R}$ of degree $d$ such that for $x \in [-R,0]$
  \[ |p_d(x) - \mbox{ReLU}(x)| \le 14 R\sqrt{\frac{d}{\tau\pi}} e^{-\sqrt{\pi \tau d/4}} \]
  and for $x \in [0,R]$,
\[ |p_d(x) - \mbox{ReLU}(x)| \le 2R\tau + 2R\sqrt{\frac{4 \tau}{\pi d}} + 12R\sqrt{\frac{d}{\tau\pi}}e^{-\sqrt{\pi \tau d/4}}. \]  
\end{thm}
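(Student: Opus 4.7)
The approach is to write $p_d(x) = x \cdot q_{d-1}(x)$, where $q_{d-1}$ is a polynomial approximation of an analytically-smoothed step function, chosen so that the $x$ factor automatically kills the nearly-zero negative-side region while preserving the positive-side behavior. Concretely, introduce a smoothing scale $\delta > 0$ (to be optimized) and consider the shifted smoothed step $s_\delta(x) = \tfrac{1}{2}\bigl(1 + \tanh((x - \tau R/2)/\delta)\bigr)$, centered at the midpoint of the transition window $[0, \tau R]$. Define $q_{d-1}$ as the degree-$(d-1)$ Chebyshev truncation of $s_\delta$ on $[-R, R]$. The claim then reduces to simultaneously bounding two errors: the \emph{smoothing error} $|s_\delta - \mathrm{step}|$ and the \emph{Chebyshev truncation error} $|q_{d-1} - s_\delta|$, over each of the three regions $[-R, 0]$, $[0, \tau R]$, and $[\tau R, R]$.

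For the smoothing error, the exponential tails of $\tanh$ give $|s_\delta(x) - \mathrm{step}(x)| = O(e^{-|x - \tau R/2|/\delta})$, so on $[-R, 0] \cup [\tau R, R]$ one gets $O(e^{-\tau R/(2\delta)})$. For the truncation error, $s_\delta$ is meromorphic with the nearest poles (inherited from $\tanh$) at height $\pi\delta/2$ above the real axis, so it is analytic in a Bernstein ellipse of parameter $\rho \approx 1 + \pi\delta/(2R)$; the standard Bernstein estimate yields $|q_{d-1}(x) - s_\delta(x)| \le O\bigl(M/(\rho-1)\bigr)\rho^{-d}$ uniformly on $[-R, R]$, where $M$ is the supremum of $|s_\delta|$ on the ellipse (a constant). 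Balancing the two errors by setting $\delta = R\sqrt{\tau/(\pi d)}$ produces a combined bound of order $\sqrt{d/(\tau\pi)}\,e^{-\sqrt{\pi\tau d/4}}$, which is exactly the prefactor and exponent appearing in the theorem.

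The final step is to translate the bounds on $q_{d-1}$ into bounds on $p_d = x\, q_{d-1}$. For $x \in [-R, 0]$: $|p_d(x)| = |x|\,|q_{d-1}(x)| \le R \cdot O(\sqrt{d/(\tau\pi)}\,e^{-\sqrt{\pi\tau d/4}})$, matching the first bound. For $x \in [\tau R, R]$: $|p_d(x) - x| = |x|\,|q_{d-1}(x) - 1|$ is controlled in the same way. The transition region $[0, \tau R]$ is the most delicate: here $|q_{d-1}(x) - 1|$ is only $O(1)$ in the middle of the window, while $|x| \le \tau R$, producing the leading $2R\tau$ contribution; the intermediate $2R\sqrt{4\tau/(\pi d)} = 4R\delta$ term accounts for a band of width $\Theta(\delta)$ near the center of the transition where $|s_\delta|$ has not yet saturated.

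The main obstacle is tracking the precise constants (the $14$, $12$, and $\pi/4$). This requires a careful evaluation of the supremum of $|s_\delta|$ on the Bernstein ellipse just inside the poles of $\tanh$, combined with the sharp Chebyshev tail bound $\sum_{k>d}\rho^{-k} = \rho^{-d}/(\rho-1)$; once these are pinned down, the three-region case analysis for $p_d = x\,q_{d-1}$ is essentially bookkeeping. A minor secondary technicality is verifying that the smoothing parameter $\delta = R\sqrt{\tau/(\pi d)}$ is indeed small enough (given the hypothesis $d \ge 7$ and $\tau < 1/2$) so that the approximation $\rho - 1 \approx \pi\delta/(2R)$ is valid and the Bernstein-ellipse computation is not degenerate.
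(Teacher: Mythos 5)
Your proposal is fundamentally the same approach as the paper: replace the non-smooth function by an analytic surrogate with singularities at height $\Theta(\delta)$ off the real axis, then invoke the Bernstein-ellipse bound on best polynomial approximation of analytic functions, and balance the smoothing error against the truncation error by tuning $\delta$ (equivalently the paper's $\beta \sim 1/\delta$). The only real structural difference is that you factor $\mathrm{ReLU}(x) = x\cdot\mathrm{step}(x)$ and smooth the step with $\tanh$, whereas the paper smooths $\mathrm{ReLU}$ directly by softplus $g_\beta(x) = \frac{1}{\beta}\log(1+e^{\beta x})$, then shifts by $\tau$. These are twins: the derivative of softplus is $\frac{1}{2}\bigl(1+\tanh(\beta x/2)\bigr)$, so the two surrogates have singularities at exactly the same height, and the exponent $e^{-\Theta(\sqrt{\tau d})}$ falls out identically.

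One subtlety is worth flagging, because it is precisely where the two choices are not interchangeable for free. You assert that $M = \sup_{z\in D_\rho}|s_\delta(z)|$ is a constant. That holds only if the ellipse is backed off from the poles of $\tanh$ by a \emph{constant fraction} of the pole height (e.g.\ semi-minor axis at half the pole height), in which case $|\tanh|$ is bounded by an absolute constant on the strip $|\Im| \le \pi/4$. But with that choice of ellipse, the balanced exponent you get is $e^{-c\sqrt{\tau d}}$ with a $c$ smaller by a constant factor than the $\sqrt{\pi/4}$ in the statement. To recover the exact exponent you must push the ellipse closer to the poles, at which point $M$ blows up like $1/(1-c)$ where $c$ is the backoff fraction; since $\tanh$ has genuine simple poles, $M$ grows polynomially in $d$ if you go that close. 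The paper's softplus avoids this friction: $\log$ has only a branch-point singularity with $|\log|$ growing logarithmically, so the paper keeps $|g_\beta(z)| < 6$ on the half-distance ellipse with the constant-factor-sharper $\rho$, and the bookkeeping for the $14$, $12$, and $\sqrt{\pi/4}$ goes through cleanly. This does not break your argument --- a $\mathrm{poly}(d)$ blow-up in $M$ only affects the polynomial prefactor, not the $e^{-\Theta(\sqrt{\tau d})}$ form --- but it does mean your constants will not match without that extra care, and it is the one genuine advantage of the paper's surrogate.

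Two minor bookkeeping points. First, your identification $2R\sqrt{4\tau/(\pi d)} = 4R\delta$ is inconsistent with $\delta = R\sqrt{\tau/(\pi d)}$ (it would require $R=1$); this is exactly the kind of $R$-scaling slip the paper sidesteps by proving everything at $R=1/2$ and rescaling at the end, which is a cleaner route than carrying $R$ through the Bernstein machinery. Second, your degree budget is $q_{d-1}$ of degree $d-1$, giving $p_d$ of degree $d$; this shifts $d\mapsto d-1$ in the truncation bound, which is harmless but should be tracked if you want the stated constants.
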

The proof proceeds by combining a mollification of ReLU with complex analytic machinery from approximation theory. Before presenting it, 
let us see how it can be used to imply the main result, Theorem \ref{thm:poly-upper-bound-intro}. 

Toward that, we substitute our polynomial construction for $\rho_{\tau}$ in Lemma~\ref{lem:thresholding-regression}. Namely, define $M_{\tau} = M + 2\tau$ and
\[ \tilde{\rho}_{d,\tau,M} = p_{d,\tau/M_{\tau},M_{\tau}}(x - \tau) + p_{d,\tau/M_{\tau},M_{\tau}}(-x + \tau) \]
where $p$ is the polynomial constructed in Theorem~\ref{thm:relu-approximant}. We then have: 
\begin{lem}\label{lem:polynomial-soft-threshold}
  Suppose $\epsilon,\tau > 0$ and $M \ge 1$. Then for all $d \ge d_0 = \Omega(\frac{M_{\tau}}{\tau}\log^2(\frac{M_{\tau}}{\epsilon \tau}))$, for $|x| \in (\tau, M_{\tau})$ we have
  \[ |\tilde{\rho}_{d,\tau,M}(x) - x| \le 3\tau + \epsilon \]
  and for $|x| \le \tau$ we have
  \[ |\tilde{\rho}_{d,\tau,M}(x)| \le \epsilon \]
\end{lem}
\begin{proof}
  By the guarantee of Theorem~\ref{thm:relu-approximant}, we see that for for $|x| \le \tau$ that
  \[ |\tilde{\rho}_{d,\tau,M}(x)| \le 28 M_{\tau} \sqrt{\frac{d M_{\tau}}{\tau\pi}}e^{-\sqrt{\pi \tau d/4M_{\tau}}}. \]  
  Thus we see that taking $d = \Omega(\frac{M_{\tau}}{\tau}\log^2(\frac{M_{\tau}}{\epsilon \tau}))$ suffices to make the latter expression at most $\epsilon$.
  Similarly for $|x| > \tau$ we know that
  \[ |\tilde{\rho}_{d,\tau,M}(x)| \le 2\tau + 2M_{\tau}\sqrt{\frac{4 \tau}{M_{\tau} \pi d}} + 26 M_{\tau} \sqrt{\frac{d M_{\tau}}{\tau\pi}}e^{-\sqrt{\pi \tau d/4M_{\tau}}} \]
  and taking $d = \Omega(\frac{M_{\tau}}{\tau}\log^2(\frac{M_{\tau}}{\epsilon \tau}))$ with sufficiently large constant guarantees the middle term is at most $\tau$ and the last term is at most $\epsilon$.
\end{proof}

Using this, we can show that if we use a polynomial of degree $\Omega((M/\sigma \sqrt{\log n}) \log^2 m )$ we can achieve similar statistical performance to
  the ReLu network. Namely, we can show:
\begin{lem}
  Suppose $A$ is $\mu$-incoherent i.e. $\|A^{\top} A - Id\|_{\infty} \le \mu$. Let $z$   be an arbitrary fixed vector such that $\|z\|_1 \le M$ and $|\supp(z)| \le k$.
  Suppose $x = A z + \xi$ where $\xi \sim N(0, \sigma^2 Id_{n \times n})$.
  Then for some $\tau = \Theta(\sigma \sqrt{(1 + \mu) \log m} + \mu M)$, for any $d \ge d_0 = \Omega(\frac{M_{\tau}}{\tau} \log^2(M_{\tau} m/\tau^2))$,
  if we take $\hat{z} := \tilde{\rho}_{d,\tau,M}^{\otimes n}(A^{\top} x)$, then with high probability we have $\|\hat{z} - z\|_1 \le 6k\tau$.
\end{lem}
\begin{proof}
  Apply Lemma~\ref{lem:polynomial-soft-threshold} with $\epsilon = \tau/m$. Then we see
  for $|x| \in (\tau, M_{\tau})$ we havn
  \[ |\tilde{\rho}_{d,\tau,M}(x) - x| \le (3 + 1/m)\tau \le 4 \tau \]
  and for $|x| \le \tau$ we have
  \[ |\tilde{\rho}_{d,\tau,M}(x)| \le \tau/m \]    
  Observe that
  \[ A^{\top} x = z + (A^{\top} A - Id)z + A^{\top} \xi. \]  
  Note that entry $i$ of $A^{\top} \xi$ is $\langle A_i, \xi \rangle$ where
  $\|A_i\|_2^2 \le (1 + \mu)$ so $(A^t \xi)_i$ is Gaussian with variance
  at most $\sigma^2(1 + \mu)$.

  By choosing $\tau$ with sufficiently large constant, then applying the sub-Gaussian tail bound and union bound, with high probability all
  coordinates not in the true support are thresholded to at most $\tau/m$. Similarly we see
  that for each of the coordinates in the support, an error of at most $5\tau$
  is made. Therefore $\|\hat{z} - z\|_1 \le 5k \tau + m(\tau/m) \le 6k \tau$.
\end{proof}

Now we have all the ingredients to prove Theorem~\ref{thm:poly-upper-bound-intro}: 
\begin{proof}[Proof of Theorem \ref{thm:poly-upper-bound-intro}] 
Define an estimate for $Y$ by taking $\hat{Z}_{d,M} := \tilde{\rho}_{d,\tau,M}^{\otimes n}(A^{\top} X)$ where $\tau$ is defined as in the Lemma, and then taking
$ \hat{Y}_{d,M} := \langle w, \hat{Z}_{d,M} \rangle $. Applying the previous Lemma, we get analogous versions of Theorem~\ref{thm:2layer} by the same argument as in that theorem.
\end{proof}

Finally, we return to the proof of Theorem~\ref{thm:relu-approximant}: 
\begin{proof}[Proof of Theorem~\ref{thm:relu-approximant}]
  We start with the case where $R = 1/2$.
  We build the approximation in two steps. First we approximate ReLu by
  the following ``annealed'' version of ReLu, for parameters $\beta > \pi,\tau > 0$
  to be optimized later:
  \[ g_{\beta}(x) = \frac{1}{\beta} log(1 + e^{\beta x}) \]
  \[ f_{\beta,\tau}(x) = g_{\beta}(x - \tau). \]
  Observe that when we look at negative inputs, $g_{\beta}(-x) = \frac{1}{\beta} \log(1 + e^{-\beta x}) \le \frac{1}{\beta} e^{-\beta x}$.
  Therefore when $x < 0$, $f_{\beta}(x) \le \frac{1}{\beta} e^{-\beta \tau}$.

  For the second step,, we need to show $f_{\beta}$ can be well-approximated by low-degree polynomials. In fact, because $f_{\beta}$
  is analytic in a neighborhood of the origin, it turns out that its optimal rate of approximation is determined
  exactly by its complex-analytic properties. More precisely, define $D_{\rho}$ to be
  the region bounded by the ellipse in $\mathbb{C} = \mathbb{R}^2$ centered at the origin with equation
  \[ \frac{x^2}{a^2} + \frac{y^2}{b^2} = 1 \] with semi-axes
  $a = \frac{1}{2}(\rho + \rho^{-1})$ and
  $b = \frac{1}{2} |\rho - \rho^{-1}|$; the focii of the ellipse are
  $\pm 1$.  For an arbitrary function $f : [-1,1] \to \mathbb{R}$, let $E_d(f)$ denote the error of the best polynomial
  approximation of degree $d$ in infinity norm on the interval $[-1,1]$ of $f$. Then the following
  theorem of Bernstein exactly characterizes the growth rate of $E_d(f)$:
  \begin{thm}[Theorem 7.8.1, \cite{constructive-approximation}]
    Let $f$ be a function defined on $[-1,1]$. Let $\rho_0$ be the supremum of all $\rho$ such that
    $f$ has an analytic extension on $D_{\rho}$. Then
\[ \limsup_{d \to \infty} \sqrt[d]{E_d(f)} = \frac{1}{\rho_0} \]    
  \end{thm}
  For our application we need only the upper bound and we need a quantitative estimate for finite $n$.
  Following the proof of the upper bound in \cite{constructive-approximation}, we get the following result:
\begin{thm}\label{bernstein-quantitative}
  Suppose $f$ is analytic on the interior of $D_{\rho_1}$ and $|f(z)| \le M$ on the closure of $D_{\rho_1}$.  Then
\[ E_d(f) \le \frac{2M}{\rho_1 - 1} \rho_1^{-n} \]
\end{thm}
We will now apply this theorem to $g_{\beta}$. First, we claim that $g_{\beta}$ is analytic on $D_{\rho_1}$ where
$\rho_1$ is the solution to this equation for the semi-axis of the ellipse:
\[ \frac{1}{2} (\rho - \rho^{-1}) = \frac{\pi}{2\beta} \]
which is
\[ \rho_1 = \frac{\sqrt{4 \beta^2 + \pi^2} + \pi}{2 \beta} > 1 + \pi/2\beta. \]
To see this, first extend $\log$ to the complex plane by taking a branch cut at $(-\infty,0]$. To prove
$g_{\beta}$ is analytic on $D_{\rho_1}$, we just need to prove that $1 + e^{\beta z}$ avoids $(-\infty,0]$
for $z \in D_{\rho_1}$. This follows because by the definition of $\rho_1$, for every $z \in D_{\rho_1}$,
$\Im(z) < \frac{\pi}{2 \beta}$ hence $\Re(1 + e^{\beta z}) \ge 1$. We also see that for $z \in D_{\rho_1}$,
\[ |g_{\beta}(z)| = \frac{1}{\beta} |\log(1 + e^{\beta z})| \le \frac{1}{\beta} \sup_{w \in D_{\beta \rho_1}} |\log(1 + e^{w})| \le \frac{1}{\beta}(\log(1 + e^{\beta}) + \pi) < 6. \]
Therefore by Theorem~\ref{bernstein-quantitative} we have
\[ E_d(g_{\beta}) \le \frac{12 \beta}{\pi} (1 + \pi/2\beta)^{-n} \le \frac{12 \beta}{\pi} e^{-\pi n/4 \beta} \]
where in the last step we used that $1 + x \ge \exp(x/2)$ for $x < 1/2$ and that $\beta > \pi$.
Let $\tilde{g}_{\beta,d}$ denote the best polynomial approximation to $g_{\beta}$ of degree $d$ and let
$\tilde{f}_{\beta,\tau,d} = \tilde{g}_{\beta,d}(x - \tau)$

Thus for $x \in [-1 + \tau,0]$,
\[ |ReLu(x) - \tilde{f}_{\beta,\tau,d}(x)| \le |f_{\beta,\tau}(x)| + |\tilde{g}_{\beta,d}(x - \tau) - g_{\beta,\tau}(x - \tau)| \le \frac{1}{\beta} e^{-\beta \tau} + \frac{12 \beta}{\pi} e^{-\pi d/4 \beta} \]
Take $\beta = \sqrt{\pi d/4\tau}$ and require $d > 7$ so that $\beta > 1$, then for $x \in [-1 + \tau,0]$,
\[ |ReLu(x) - \tilde{f}_{\beta,\tau,d}(x)| \le 7\sqrt{\frac{d}{\tau\pi}} e^{-\sqrt{\pi \tau d/4}} \]
For $x \in (0, 1 - \tau]$ we have by the 1-Lipschitz property of $g_{\beta}$ and calculus that
\[ |x - f_{\beta,\tau}(x)| \le \tau + |x - g_{\beta}(x)| \le \tau + \frac{\log 2}{\beta} \]
so
\[ |ReLu(x) - \tilde{f}_{\beta,\tau,d}(x)| \le |x - f_{\beta,\tau}(x)| + |\tilde{g}_{\beta,d}(x - \tau) - g_{\beta,\tau}(x - \tau)| \le \tau + \frac{\log 2}{\beta} + \frac{12 \beta}{\pi} e^{-\pi d/4 \beta}.\]
Plugging in our value of $\beta$ and using $\log 2 \le 1$ gives
\[ |ReLu(x) - \tilde{f}_{\beta,\tau,d}(x)| \le \tau + \sqrt{\frac{4 \tau}{\pi d}} + 6\sqrt{\frac{d}{\tau\pi}}e^{-\sqrt{\pi \tau d/4}} \]
Now the result for general $R$ follows by taking $p_d(x) = 2R \tilde{f}_{\beta,\tau,d}(x/2R)$, since $2R \cdot ReLu(x/2R) = ReLu(x)$ and $[-1/2,1/2] \subset [-1 + \tau, 1 - \tau]$. 
\end{proof}

\section{Conclusions}

We've attacked the problem of providing representation lower and upper bounds 
for different classes of universal approximators in a natural statistical setup. 
We hope this will inspire researches to move beyond the worst-case setup when 
considering the representational power of different predictors. 

The techniques we develop are interesting in their own right: unlike standard approximation 
theory setups, we need to design polynomials which may only need to be accurate in 
certain regions. Conceivably, in classification setups, similar wisdom may be helpful: 
the approximator needs to only be accurate near the decision boundary. 

Finally, we conclude with a tantalizing open problem: 
%We have obtained almost tight upper and lower bounds for the
%representational power of polynomial kernels in the setting in which
%$Z$ is $k$-sparse for small $k$ with $k << n$, and shown a similar
%upper bound using a simple 2-layer ReLU network. 
In general it is possible to obtain non-trivial sparse recovery guarantees for
LASSO even when the sparsity $k$ is nearly of the same order as $n$
under assumptions such as RIP. Since LASSO can be computed quickly using iterated soft thresholding (ISTA
and FISTA, see \cite{beck2009fast}), we see that sufficiently deep
neural networks can compute a near-optimal solution in this setting
as well. It would be interesting to determine whether shallower networks
and polynomials of degree $\mbox{polylog}(n)$ can achieve a similar guarantee.

\bibliographystyle{plainnat}
\bibliography{bib}

\begin{thebibliography}{19}
\providecommand{\natexlab}[1]{#1}
\providecommand{\url}[1]{\texttt{#1}}
\expandafter\ifx\csname urlstyle\endcsname\relax
  \providecommand{\doi}[1]{doi: #1}\else
  \providecommand{\doi}{doi: \begingroup \urlstyle{rm}\Url}\fi

\bibitem[Bartlett et~al.(2017)Bartlett, Foster, and
  Telgarsky]{bartlett2017spectrally}
Peter~L Bartlett, Dylan~J Foster, and Matus~J Telgarsky.
\newblock Spectrally-normalized margin bounds for neural networks.
\newblock In \emph{Advances in Neural Information Processing Systems}, pages
  6241--6250, 2017.

\bibitem[Beck and Teboulle(2009)]{beck2009fast}
Amir Beck and Marc Teboulle.
\newblock A fast iterative shrinkage-thresholding algorithm for linear inverse
  problems.
\newblock \emph{SIAM journal on imaging sciences}, 2\penalty0 (1):\penalty0
  183--202, 2009.

\bibitem[Chang et~al.(2010)Chang, Hsieh, Chang, Ringgaard, and
  Lin]{chang2010training}
Yin-Wen Chang, Cho-Jui Hsieh, Kai-Wei Chang, Michael Ringgaard, and Chih-Jen
  Lin.
\newblock Training and testing low-degree polynomial data mappings via linear
  svm.
\newblock \emph{Journal of Machine Learning Research}, 11\penalty0
  (Apr):\penalty0 1471--1490, 2010.

\bibitem[DeVore and Lorentz(1993)]{constructive-approximation}
Ronald~A DeVore and George~G Lorentz.
\newblock \emph{Constructive approximation}, volume 303.
\newblock Springer Science \& Business Media, 1993.

\bibitem[Diakonikolas et~al.(2010)Diakonikolas, Kane, and
  Nelson]{ft-mollification}
Ilias Diakonikolas, Daniel~M Kane, and Jelani Nelson.
\newblock Bounded independence fools degree-2 threshold functions.
\newblock In \emph{Foundations of Computer Science (FOCS), 2010 51st Annual
  IEEE Symposium on}, pages 11--20. IEEE, 2010.

\bibitem[Eldan and Shamir(2016)]{eldan2016power}
Ronen Eldan and Ohad Shamir.
\newblock The power of depth for feedforward neural networks.
\newblock In \emph{Conference on Learning Theory}, pages 907--940, 2016.

\bibitem[Hardt et~al.(2016)Hardt, Recht, and Singer]{hardt2015train}
Moritz Hardt, Ben Recht, and Yoram Singer.
\newblock Train faster, generalize better: Stability of stochastic gradient
  descent.
\newblock In \emph{International Conference on Machine Learning}, pages
  1225--1234, 2016.

\bibitem[Livni et~al.(2014)Livni, Shalev-Shwartz, and Shamir]{livni-et-al}
Roi Livni, Shai Shalev-Shwartz, and Ohad Shamir.
\newblock On the computational efficiency of training neural networks.
\newblock In \emph{Advances in Neural Information Processing Systems}, pages
  855--863, 2014.

\bibitem[Moitra(2018)]{moitra2014algorithmic}
Ankur Moitra.
\newblock \emph{Algorithmic aspects of machine learning}.
\newblock Preprint. Cambridge University Press (to appear), 2018.

\bibitem[Newman et~al.(1964)]{newman1964rational}
Donald~J Newman et~al.
\newblock Rational approximation to $| x| $.
\newblock \emph{The Michigan Mathematical Journal}, 11\penalty0 (1):\penalty0
  11--14, 1964.

\bibitem[Poggio et~al.(2017)Poggio, Mhaskar, Rosasco, Miranda, and
  Liao]{poggio2017}
Tomaso Poggio, Hrushikesh Mhaskar, Lorenzo Rosasco, Brando Miranda, and Qianli
  Liao.
\newblock Why and when can deep-but not shallow-networks avoid the curse of
  dimensionality: A review.
\newblock \emph{International Journal of Automation and Computing}, 14\penalty0
  (5):\penalty0 503--519, 2017.

\bibitem[Rigollet(2017)]{rigollet-hdstats}
Phillippe Rigollet.
\newblock High-dimensional statistics.
\newblock \emph{Lecture notes (MIT)}, 2017.

\bibitem[Safran and Shamir(2017)]{safran2017depth}
Itay Safran and Ohad Shamir.
\newblock Depth-width tradeoffs in approximating natural functions with neural
  networks.
\newblock In \emph{International Conference on Machine Learning}, pages
  2979--2987, 2017.

\bibitem[Shalev-Shwartz et~al.(2011)Shalev-Shwartz, Shamir, and
  Sridharan]{shalev2011learning}
Shai Shalev-Shwartz, Ohad Shamir, and Karthik Sridharan.
\newblock Learning kernel-based halfspaces with the 0-1 loss.
\newblock \emph{SIAM Journal on Computing}, 40\penalty0 (6):\penalty0
  1623--1646, 2011.

\bibitem[Telgarsky(2016)]{telgarsky2016benefits}
Matus Telgarsky.
\newblock Benefits of depth in neural networks.
\newblock In \emph{Conference on Learning Theory}, pages 1517--1539, 2016.

\bibitem[Telgarsky(2017)]{telgarsky2017neural}
Matus Telgarsky.
\newblock Neural networks and rational functions.
\newblock In \emph{International Conference on Machine Learning}, pages
  3387--3393, 2017.

\bibitem[Vershynin(2018)]{vershynin2016high}
Roman Vershynin.
\newblock \emph{High-Dimensional Probability}.
\newblock Cambridge University Press (to appear), 2018.

\bibitem[Yarotsky(2017)]{yarotsky2017error}
Dmitry Yarotsky.
\newblock Error bounds for approximations with deep relu networks.
\newblock \emph{Neural Networks}, 94:\penalty0 103--114, 2017.

\bibitem[Zhang et~al.(2017)Zhang, Bengio, Hardt, Recht, and
  Vinyals]{zhang2016understanding}
Chiyuan Zhang, Samy Bengio, Moritz Hardt, Benjamin Recht, and Oriol Vinyals.
\newblock Understanding deep learning requires rethinking generalization.
\newblock \emph{ICLR}, 2017.

\end{thebibliography}

\newpage
\appendix
%{\Large \textbf{Supplementary Material for ``Representational power of ReLU networks and polynomial kernels: beyond worst-case analysis"}}
%\setcounter{page}{1}

%%% Local Variables:
%%% mode: latex
%%% TeX-master: "main"
%%% End:
\section{Upper bound via 2-Layer ReLu Network}\label{sec:relu-upperbound}
First we describe the 2-layer ReLu network that we will analyze.
Define the soft threshold function with threshold $\tau$: $\rho_{\tau}(x) = \sgn(x)\tau \min(0, |x| - \tau) = \mbox{ReLu}(x - \tau) + \mbox{ReLu}(-x + \tau)$.
Then we consider the following estimate for $y$, which corresponds to a 2-layer neural network:
\[ \hat{Z}_{NN} := \rho_{\tau}^{\otimes n}(A^{\top} X) \]
\[ \hat{Y}_{NN} := \langle w, \hat{Z}_{NN} \rangle \] We will first
prove a bound on the error of the soft-thresholding estimator
$\hat{Z}_{NN}$ (Lemma~\ref{lem:thresholding-regression}), which
corresponds to the hidden layer of the neural network and is
essentially a standard fact in high-dimensional statistics (see
reference text \cite{rigollet-hdstats}). The idea is that the
soft thresholding will correctly zero-out most of the coordinates
in the support while adding only a small additional error
to the coordinates outside the support.

From the recovery guarantee for $\hat{Z}_{NN}$, we will then deduce the
following theorem for the estimator $\hat{Y}_{NN}$:
\begin{thm}\label{thm:2layer}
With high probability, the estimator  $\hat{Y}_{NN}$ satisfies 
\[ |\hat{Y}_{NN} - Y|^2 = O((1 + \mu) \sigma^2 k^2 \log(m) + \mu^2 k^2 M^2) \]
\end{thm}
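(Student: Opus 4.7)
The plan is to reduce the analysis of $\hat{Y}_{NN}$ to a clean coordinate-wise control of $\hat{Z}_{NN}$ via Hölder's inequality, since
\[ |\hat{Y}_{NN} - Y| = |\langle w, \hat{Z}_{NN} - Z\rangle| \le \|w\|_\infty \|\hat{Z}_{NN} - Z\|_1 = \|\hat{Z}_{NN} - Z\|_1. \]
So it suffices to show that with high probability $\|\hat{Z}_{NN} - Z\|_1 = O(k\tau)$ with $\tau = \Theta(\sigma\sqrt{(1+\mu)\log m} + \mu M)$; squaring then yields the desired bound.

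To control $\hat{Z}_{NN} - Z$ I would first decompose
\[ A^\top X = A^\top A Z + A^\top \xi = Z + (A^\top A - I) Z + A^\top \xi \]
and bound the ``effective noise'' $(A^\top X)_i - Z_i$ uniformly in $i$. The bias term is controlled deterministically by incoherence: $\|(A^\top A - I)Z\|_\infty \le \|A^\top A - I\|_\infty \|Z\|_1 \le \mu M$. For the random term, $(A^\top \xi)_i = \langle A_i, \xi\rangle$ is a subgaussian scalar with variance proxy at most $\sigma^2 \|A_i\|_2^2 \le \sigma^2(1+\mu)$ (since the diagonal entries of $A^\top A$ are within $\mu$ of $1$). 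A standard subgaussian tail bound combined with a union bound over the $m$ coordinates gives $\max_i |(A^\top \xi)_i| = O(\sigma\sqrt{(1+\mu)\log m})$ with high probability. Choosing the threshold $\tau$ a large enough constant times $\sigma\sqrt{(1+\mu)\log m} + \mu M$ ensures that with high probability $|(A^\top X)_i - Z_i| \le \tau$ for every $i$ simultaneously.

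Conditioned on this uniform bound, the soft-thresholding operator $\rho_\tau$ behaves as desired (this is the standard reasoning behind Lemma~\ref{lem:thresholding-regression}): for $i \notin \supp(Z)$ we have $|(A^\top X)_i| \le \tau$, so $\rho_\tau((A^\top X)_i) = 0 = Z_i$ contributes no error; for $i \in \supp(Z)$, the triangle inequality together with the fact that $|\rho_\tau(u) - u| \le \tau$ whenever $\rho_\tau(u) \ne 0$ (and $\rho_\tau(u)=0$ only when $|u|\le\tau$, in which case $|Z_i| \le 2\tau$) gives a per-coordinate error of at most $2\tau$. Summing over the at most $k$ coordinates in $\supp(Z)$ yields $\|\hat{Z}_{NN} - Z\|_1 \le 2k\tau$, and hence
\[ (\hat{Y}_{NN} - Y)^2 \le 4k^2\tau^2 = O\bigl(k^2\bigl((1+\mu)\sigma^2 \log m + \mu^2 M^2\bigr)\bigr), \]
which is the claimed bound.

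The main obstacle, which is mostly bookkeeping rather than conceptual, is calibrating the constant in $\tau$ so that the ``high probability'' event genuinely covers all $m$ coordinates simultaneously and so that the two regimes (inside vs.\ outside the support) combine cleanly; once the uniform noise bound $|(A^\top X)_i - Z_i| \le \tau$ is in hand, the soft-threshold analysis and the Hölder step are essentially one-liners.
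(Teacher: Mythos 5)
Your proposal is correct and follows essentially the same route as the paper: decompose $A^\top X = Z + (A^\top A - I)Z + A^\top \xi$, bound the bias term deterministically by incoherence, bound the noise term by a subgaussian tail plus union bound, choose $\tau$ so that with high probability all coordinates outside $\supp(Z)$ are thresholded to zero while those inside incur error at most $2\tau$, and finish by H\"older. The only cosmetic difference is that you bound $\|\hat{Z}_{NN}-Z\|_1 \le 2k\tau$ directly, whereas the paper restricts to the support and uses $\|\hat{Z}_{NN}-Z\|_\infty \le 2\tau$ together with $|\supp| \le k$; these are equivalent once one notes that $\supp(\hat{Z}_{NN}-Z) \subset \supp(Z)$.
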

In order to interpret this, note that standard constructions of
incoherent matrices, such as a matrix with independent random
$\pm 1/\sqrt{n}$ entries, give incoherence of
$\mu = O(\sqrt{\frac{\log m}{n}})$ (this follows from concetration
inequalities and a union bound. See e.g. reference text
\cite{rigollet-hdstats}). Therefore for such an incoherent matrix, and
a choice of $k$ which is not too large with respect to $n$, the effect
of $\mu$ in the bound is small and can be disregarded. Then this bound
is intuitive because if we think of $k$ as small and fixed, it
says the error is on the order of the noise $\sigma^2$ with an
additional log factor for not knowing where the true support lies.

Towards proving the above result, we first need an estimate on the bias of $A^{\top} x$, i.e. the error without noise:
\begin{lem}
  Suppose $A$ is $\mu$-incoherent i.e. $\|A^{\top} A - Id\|_{\infty} \le \mu$. Then for any $z$, $\|A^{\top} A z - z\|_{\infty} \le \mu \|z\|_1$.
\end{lem}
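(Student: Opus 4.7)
The plan is very short because the statement is essentially a one-line Hölder-type bound. Let me set $B := A^{\top} A - I$, so that by hypothesis every entry satisfies $|B_{ij}| \le \mu$, i.e. $\|B\|_{\infty} \le \mu$ in the entrywise max-norm. The goal is to control $\|Bz\|_{\infty}$ in terms of $\|z\|_1$, which is precisely the dual pairing between $\ell_\infty$ on rows and $\ell_1$ on the input vector.

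The single step I would carry out is: for each coordinate $i$, expand
\[ (Bz)_i = \sum_j B_{ij} z_j, \]
then apply the triangle inequality coordinate-wise followed by the entrywise bound on $B$:
\[ |(Bz)_i| \;\le\; \sum_j |B_{ij}|\,|z_j| \;\le\; \mu \sum_j |z_j| \;=\; \mu\,\|z\|_1. \]
Taking the maximum over $i$ yields $\|Bz\|_\infty \le \mu\|z\|_1$, which is exactly $\|A^{\top} A z - z\|_\infty \le \mu\|z\|_1$.

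There is essentially no obstacle here: the lemma is a standard operator-norm inequality of the form $\|M\|_{\ell_1 \to \ell_\infty} = \max_{i,j} |M_{ij}|$, applied to $M = A^{\top}A - I$. The only thing to be mindful of is matching the notation $\|\cdot\|_\infty$ in the hypothesis (entrywise max of a matrix) with its use on the conclusion side ($\ell_\infty$ norm of a vector), but that is a matter of convention rather than content.
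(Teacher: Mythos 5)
Your proof is correct and takes essentially the same approach as the paper: you write $(A^{\top}Az - z)_i$ as a sum over $j$ with coefficients bounded entrywise by $\mu$, then apply the triangle inequality and pull out $\|z\|_1$. The paper phrases this via inner products of columns of $A$ rather than entries of $B = A^{\top}A - I$, but the computation is identical.
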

\begin{proof}
  \[ (A^{\top} A z)_i = \langle A_i, \sum_j z_j A_j \rangle = z_i \langle A_i, A_i \rangle + \sum_{j \ne i} z_j \langle A_i,A_j \rangle \]
  so applying the incoherence assumption we have $|(A^{\top} A z)_i - z_i| \le \mu \|z\|_1$.
\end{proof}
Using this we can analyze the error in thresholding.
\begin{lem}\label{lem:thresholding-regression}
  Suppose $A$ is $\mu$-incoherent i.e. $\|A^{\top} A - I\|_{\infty} \le \mu$. Let $z$   be an arbitrary fixed vector such that $\|z\|_1 \le M$ and $|\supp(z)| \le k$.
  Suppose $x = A z + \xi$ where $\xi \sim N(0, \sigma^2 I_{n \times n})$.
  Then for some $\tau = \Theta(\sigma \sqrt{(1 + \mu) \log m} + \mu M)$ and $\hat{z} = \rho_{\tau}^{\otimes n}(A^{\top} x)$, with high probability we have $\|\hat{z} - z\|_{\infty} \le 2\tau$ and $\supp(\hat{z}) \subset \supp(z)$.
\end{lem}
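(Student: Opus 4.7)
The plan is to perform a standard decomposition of $A^{\top} x$ into a signal part, a bias coming from incoherence, and a Gaussian noise term, then show both the bias and noise contributions are bounded by roughly $\tau/2$ so that the soft threshold cleans up the support correctly.

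First I would write
\[ A^{\top} x = A^{\top} A z + A^{\top} \xi = z + (A^{\top} A - I) z + A^{\top} \xi. \]
The preceding lemma immediately gives $\|(A^{\top} A - I)z\|_\infty \le \mu \|z\|_1 \le \mu M$, which handles the bias term. For the noise term, note that each coordinate $(A^{\top} \xi)_i = \langle A_i, \xi \rangle$ is a mean-zero Gaussian with variance $\sigma^2 \|A_i\|_2^2$. Incoherence gives $|\|A_i\|_2^2 - 1| \le \mu$, so the variance is at most $(1 + \mu)\sigma^2$. Applying the standard Gaussian tail bound coordinate-wise and union-bounding over the $m$ coordinates yields, with high probability,
\[ \|A^{\top} \xi\|_\infty = O\bigl(\sigma \sqrt{(1 + \mu)\log m}\bigr). \]
Combining both contributions, there exists a constant so that setting $\tau = \Theta(\sigma\sqrt{(1+\mu)\log m} + \mu M)$ guarantees $\|A^{\top} x - z\|_\infty \le \tau$ with high probability.

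Conditioned on this high-probability event, I would finish by a two-case analysis using the two elementary properties of the soft-threshold $\rho_\tau$: (i) $\rho_\tau(y) = 0$ whenever $|y| \le \tau$, and (ii) $|\rho_\tau(y) - y| \le \tau$ for all $y$. For $i \notin \supp(z)$, we have $|(A^{\top} x)_i| = |(A^{\top} x)_i - z_i| \le \tau$, so property (i) forces $\hat z_i = 0$; this simultaneously gives $\supp(\hat z) \subset \supp(z)$ and zero entrywise error off the support. For $i \in \supp(z)$, triangle inequality with property (ii) gives
\[ |\hat z_i - z_i| \le |\rho_\tau((A^{\top} x)_i) - (A^{\top} x)_i| + |(A^{\top} x)_i - z_i| \le \tau + \tau = 2\tau. \]
Taking the maximum over $i$ yields $\|\hat z - z\|_\infty \le 2\tau$.

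The only nontrivial step is the high-probability bound on $\|A^{\top}\xi\|_\infty$, and even that is routine because the coordinates are individually Gaussian with variance at most $(1+\mu)\sigma^2$; the incoherence hypothesis enters only through this mild variance inflation and through the bias $\mu M$. No step requires the $\ell_2$ norms of columns to be exactly one, nor any structure on $z$ beyond the sparsity and $\ell_1$ bounds, so this proof cleanly separates the deterministic (bias) and probabilistic (noise) parts of the analysis.
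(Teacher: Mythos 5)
Your proof is correct and takes essentially the same approach as the paper's: decompose $A^{\top}x = z + (A^{\top}A - I)z + A^{\top}\xi$, bound the bias by $\mu M$ via the incoherence lemma, bound the noise coordinate-wise via the Gaussian tail bound and a union bound over the $m$ coordinates, and then conclude via the two elementary soft-threshold properties. You have merely spelled out explicitly the case analysis on $\supp(z)$ that the paper compresses into the final two sentences.
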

\begin{proof}
  Observe that
  \[ A^{\top} x = z + (A^{\top} A - I)z + A^{\top} \xi. \]  
  Note that entry $i$ of $A^{\top} \xi$ is $\langle A_i, \xi \rangle$ where
  $\|A_i\|_2^2 \le (1 + \mu)$ so $(A^t \xi)_i$ is subgaussian with variance proxy
  at most $\sigma^2(1 + \mu)$.

  By concentration and union bound, with high probability all
  coordinates not in the true support are thresholded to 0. Similarly we see
  that for each of the coordinates in the support, an error of at most $2\tau$
  is made.
\end{proof}
From the above lemma, we can easily prove the main theorem of this section: 
\begin{proof} [Proof of Theorem~\ref{thm:2layer}]
  When the high probability above event happens, we have the following upper bound by Holder's inequality:
  \[ |\hat{Y}_{NN} - Y|^2 = \langle w|_{\supp(h)}, (\hat{Z}_{NN} - Z)|_{\supp(h)} \rangle^2 \le k^2 \|\hat{Z}_{NN} - Z\|_{\infty}^2 = O(k^2 ((1 + \mu) \sigma^2 \log(m) + \mu^2 M^2)) \]
%  When the high probability above event happens, we have
%\[ |\hat{Y}_{NN} - Y|^2 = \langle w|_{\supp(h)}, (\rho_{\tau}^{\otimes n}(A^{\top} X) - Z)|_{\supp(h)} \rangle^2 \]
\end{proof}

% Q: can we fix the discrepancy without the assumptions?
For the lower bounds we will be interested mostly in the case when
$\mu = 0$, i.e. $A$ is orthogonal and so $m = n$, the coordinates
of $Z$ are independent and each is nonzero with probability at most $k/n$, and the noise is Gaussian.
Then the error estimate we had in the previous theorem specializes
to $O(\sigma^2 k^2 \log(n))$, but under these assumptions we know that the
information-theoretic optimal is actually $\sigma^2 k \log(n)$. We can
redo the analysis to eliminate the extra factor of $k$, without
changing the algorithm:
\begin{thm}\label{thm:predicting-Y}
  Suppose $A$ is orthogonal (hence $m = n$), the coordinates of $Z$
  are independent, and $\xi \sim N(0,\sigma^2 I)$. Then
\[ \E |\hat{Y}_{NN} - Y|^2 = O(k\sigma^2 \log(m)) \]  
\end{thm}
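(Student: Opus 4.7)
The plan is to exploit the extra structure available in this specialized setting beyond what Lemma~\ref{lem:thresholding-regression} uses: namely, because $A$ is orthogonal, $A^\top X = Z + \xi'$ where $\xi' := A^\top \xi \sim N(0,\sigma^2 I)$, and because the coordinates of $Z$ are independent of each other and of $\xi'$, the coordinates of the residual $\hat Z_{NN} - Z = \rho_\tau^{\otimes n}(Z + \xi') - Z$ are independent. This will let us replace the $\ell_\infty\to\ell_1$ Hölder bound used in Theorem~\ref{thm:2layer} (which loses a factor of $k$) by a true variance calculation.

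First I would write
\[ \E[(\hat Y_{NN} - Y)^2] = \E\!\left[\Bigl(\sum_i w_i\bigl(\rho_\tau(Z_i+\xi'_i)-Z_i\bigr)\Bigr)^2\right] = \sum_i w_i^2\,\E[(\rho_\tau(Z_i+\xi'_i)-Z_i)^2], \]
using independence across $i$ and the fact that each summand has mean zero in expectation --- actually, more simply, that the cross terms vanish by independence once we recenter; if the summands are not mean-zero I would instead bound $\E[(\sum_i w_i U_i)^2] \le (\sum_i|w_i||\E U_i|)^2 + \sum_i w_i^2\Var(U_i)$ and note that $|\E U_i|$ is also controlled by the same coordinate-wise analysis below. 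Since $\|w\|_\infty = 1$, it suffices to upper bound $\sum_i \E[(\rho_\tau(Z_i+\xi'_i)-Z_i)^2]$.

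Next I would carry out a case split on each coordinate. Conditioning on $Z_i \ne 0$: the elementary inequality $|\rho_\tau(Z_i+\xi'_i)-Z_i|\le |\xi'_i|+\tau$ (which holds both when $|Z_i+\xi'_i|>\tau$ and when it is thresholded to $0$, in the latter case since $|Z_i|\le |\xi'_i|+\tau$) yields
\[ \E[(\rho_\tau(Z_i+\xi'_i)-Z_i)^2 \mid Z_i \ne 0] \le 2\sigma^2 + 2\tau^2 = O(\tau^2). \]
Conditioning on $Z_i = 0$: $\rho_\tau(\xi'_i)^2 \le (\xi'_i)^2\mathbbm{1}\{|\xi'_i|>\tau\}$, and a standard Gaussian tail computation gives
\[ \E[(\xi'_i)^2\mathbbm{1}\{|\xi'_i|>\tau\}] = O(\sigma^2\,e^{-\tau^2/2\sigma^2}) \]
which, choosing $\tau = \Theta(\sigma\sqrt{\log m})$ with a sufficiently large constant, is $O(\sigma^2/m^{c})$ for any desired $c$.

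Finally I would sum: letting $p_i = \Pr(Z_i\ne 0)$, independence of the coordinates gives $\sum_i p_i = \E|\supp(Z)|$, which is $O(k)$ because $|\supp(Z)|\le k$ with high probability (and $|\supp(Z)|\le n$ always, so the tail contribution is negligible). Hence
\[ \sum_i \E[(\rho_\tau(Z_i+\xi'_i)-Z_i)^2] \le \sum_i\bigl(p_i\cdot O(\tau^2) + O(\sigma^2/m^c)\bigr) = O(k\tau^2) + O(\sigma^2 m^{1-c}) = O(k\sigma^2\log m), \]
which is the claimed bound. The main thing to get right is the per-coordinate variance decomposition --- there is no real obstacle, only the bookkeeping of the bias-zero and bias-nonzero cases --- and the choice of the constant in $\tau$ to absorb the $n$ tail coordinates against the Gaussian tail.
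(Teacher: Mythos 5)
Your proposal is correct and follows essentially the same route as the paper: orthogonality gives $A^\top X = Z + \xi'$ with $\xi'\sim N(0,\sigma^2 I)$, independence across coordinates reduces $\E(\hat Y_{NN}-Y)^2$ to the per-coordinate sum $\sum_i w_i^2\,\E[(\rho_\tau(Z_i+\xi'_i)-Z_i)^2]$, and a case split plus a Gaussian tail estimate with $\tau=\Theta(\sigma\sqrt{\log m})$ controls each term, giving $O(k\tau^2)$. The only cosmetic difference is that you condition on $\{Z_i=0\}$ versus $\{Z_i\ne 0\}$, while the paper conditions on $\{|\xi'_i|>\tau\}$ versus its complement; both produce the same $O(k\tau^2)$ main term plus a negligible tail.

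One remark on the hedge you raise: the exact identity $\E\bigl[(\sum_i w_i U_i)^2\bigr]=\sum_i w_i^2\,\E[U_i^2]$ with $U_i=\rho_\tau(Z_i+\xi'_i)-Z_i$ requires $\E[U_i]=0$, not mere independence. In the setting the paper intends this holds because each $Z_i$ and $\xi'_i$ is symmetric and $\rho_\tau$ is odd, so $U_i$ is a symmetric random variable; the paper uses this silently, and you are right to flag it. However the fallback you sketch, replacing the cross terms by $\bigl(\sum_i|w_i|\,|\E U_i|\bigr)^2$, would not rescue the bound if the means were nonzero: one gets $|\E U_i|=O(p_i\tau)$ with $p_i=\Pr(Z_i\ne 0)$, so $\bigl(\sum_i|\E U_i|\bigr)^2=O(k^2\tau^2)$, which only recovers the weaker rate of Theorem~\ref{thm:2layer}. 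The mean-zero (symmetry) hypothesis is therefore genuinely needed for the factor-of-$k$ improvement, and should be stated rather than finessed.
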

\begin{proof}
  In this case, we have $A^{\top} X = Z + \xi'$ where
  $\xi'\sim N(0,\sigma^2 I)$. Therefore the coordinates of $\hat{Z}$
  are independent of each other, and so we see
  \[ \E|\hat{Y}_{NN} - Y|^2 = \sum_i w_i^2 \E[(\hat{Z}_{NN} - Z)_i^2] \le \sum_i \E[(\hat{Z}_{NN} - Z)_i^2]. \]
  Let $\mathcal{E}_i$ denote the event that $|\xi'|_i > \tau$. Then
  \begin{align*}
    \sum_i \E[(\hat{Z}_{NN} - Z)_i^2]
    &= \sum_i \E[(\1_{\mathcal{E}_i} + \1_{\mathcal{E}_i^C}) (\hat{Z}_{NN} - Z)_i^2] \\
    &\le 4k \tau^2 + \sum_i \E[\1_{\mathcal{E}_i^C} (\hat{Z}_{NN} - Z)_i^2] \\
    &= 4k \tau^2 + \sum_i \Pr(\1_{\mathcal{E}_i^C}) \E[(\hat{Z}_{NN} - Z)_i^2 | \1_{\mathcal{E}_i^C} = 1] \\
    &\le 4k \tau^2 + \sum_i \Pr(\1_{\mathcal{E}_i^C}) \E[(\tau + |X'_i - Z_i|)^2 | \1_{\mathcal{E}_i^C} = 1] \\
    &\le 4k \tau^2 + \sum_i \Pr(\1_{\mathcal{E}_i^C}) (2\tau^2 + 2\E[|\xi'_i|^2 | \1_{\mathcal{E}_i^C} = 1]) \\
    &\le 4k \tau^2 + \sum_i \frac{C}{m} (2 \tau^2 + 2 C' \tau^2)
  \end{align*}
  where the first inequality follows as in
  Lemma~\ref{lem:thresholding-regression}, the second inequality uses
  that $|\rho_{\tau}(x) - x| \le \tau$, the third uses that
  $(a + b)^2 = a^2 + 2ab + b^2 \le 2a^2 + 2b^2$ by Young's inequality, and the last
  inequality follows from standard tail bounds on Gaussians. We see
  the last expression is $O(k \sigma^2 \log(m))$ so we have proved
  the result.
  % FIXME: add proof details
\end{proof}
%\Fnote{why do we have a discrepancy in $k^2$ vs $k$ in the upper and lower bound. This must be due to something trivial/inconsistent assumptions.}

%%% Local Variables:
%%% mode: latex
%%% TeX-master: "main"
%%% End:

%\input{bias-variance} 
%\input{polynomial-upper-bound-appendix}

\end{document}